\documentclass[11pt, oneside]{article}      
\usepackage{geometry}                       
\geometry{letterpaper}                          
\usepackage[T1]{fontenc}
\usepackage{graphicx}               
\usepackage{amsmath,amssymb,amsthm}

\usepackage{bm}
\usepackage{hyperref}
\usepackage{xcolor}
\usepackage{synttree}
\usepackage{tikz}
\usepackage{subcaption}
\usepackage{algorithm}
\usepackage{algorithmic}
\usepackage{multirow}
\usepackage{hhline}
\usepackage{mathtools}
\usepackage{booktabs}
\usepackage{stackengine}

\stackMath

\DeclareMathOperator*{\bigtimes}{\times}

\usepackage{setspace}
\usepackage{etoolbox}
\AtBeginEnvironment{algorithmic}{\setstretch{1.25}}

\usepackage[normalem]{ulem}

\newcommand{\Hc}{\mathcal{H}}
\newcommand{\Lc}{\mathcal{L}}
\newcommand{\Mc}{\mathcal{M}}

\newcommand{\Rc}{\mathcal{R}}
\newcommand{\Tc}{\mathcal{T}}

\newcommand{\Xc}{\mathcal{X}}
\newcommand{\Rbb}{\mathbb{R}}
\newcommand{\Ebb}{\mathbb{E}}
\newcommand{\Nbb}{\mathbb{N}}
\newcommand{\Pbb}{\mathbb{P}}
\newcommand{\Cc}{\mathcal{C}}

\newcommand{\rank}{\mathrm{rank}}
\DeclareMathOperator{\depth}{depth}
\DeclareMathOperator{\level}{level}

\newtheorem{theorem}{Theorem}[section]
\newtheorem{remark}[theorem]{Remark}
\newtheorem{example}[theorem]{Example}

\newtheorem{proposition}[theorem]{Proposition}

\usepackage{colortbl}
\definecolor{Gray}{gray}{0.90}

\title{Learning high-dimensional probability distributions using tree tensor networks}
\author{Erwan Grelier\thanks{Centrale Nantes, Laboratoire de Math\'ematiques Jean Leray, CNRS UMR 6629, 
        Joint Laboratory of Marine Technology between Naval Group and Centrale Nantes} \and Anthony Nouy\thanks{Centrale Nantes,
        Laboratoire de Math\'ematiques Jean Leray, CNRS UMR 6629}
        \and R\'egis Lebrun\thanks{ Airbus Central Research \& Technology, 
Virtual Product Engineering/XRV}} 
\date{}

\usepackage{pgfplots}
\pgfplotsset{compat=1.9}
\tikzstyle{node}=[circle,fill=black]
\usetikzlibrary{arrows.meta}

\usepackage{todonotes}
\makeatletter
\renewcommand{\todo}[2][]{\tikzexternaldisable\@todo[#1]{#2}\tikzexternalenable}
\makeatother

\begin{document}
\maketitle
\begin{abstract}
    We consider the problem of the estimation of a high-dimensional probability distribution  from i.i.d. samples of the distribution  using model classes of functions in tree-based tensor formats, a particular case of tensor networks associated with a dimension partition tree. The distribution is assumed to admit a density with respect to a product measure, possibly discrete for handling the case of discrete random variables. 
	After discussing the representation of classical model classes in tree-based tensor formats, we present learning algorithms based on empirical risk minimization using a $L^2$ contrast. 
	These algorithms exploit the multilinear parametrization of the formats to recast the nonlinear minimization problem into a sequence of empirical risk minimization problems with linear models. A suitable parametrization of the tensor in tree-based tensor format allows to obtain a linear model with orthogonal bases, so that each problem admits an explicit expression of the solution and cross-validation risk estimates. These estimations of the risk enable the model selection, for instance when exploiting sparsity in the coefficients of the representation.
	A strategy for the adaptation of the tensor format (dimension tree and tree-based ranks) is provided, which allows to discover and exploit some specific structures of high-dimensional probability distributions such as independence or conditional independence.
	We illustrate the performances of the proposed algorithms for the approximation of classical probabilistic models (such as Gaussian distribution, graphical models, Markov chain).
\end{abstract}
\noindent\textbf{Keywords:} density estimation, high-dimensional approximation, tensor networks, hierarchical tensor format, tensor train format, neural networks.\\[3pt]

\section{Introduction}
\label{sec:Introduction}
  
  The approximation of high-dimensional functions is a typical task in statistics and machine learning. We here consider the problem of the approximation of the probability distribution of a high-dimensional random vector $X = (X_1,\hdots,X_d)$, characterized by its density $f(x)$ with respect to a measure $\mu$ (here assumed to be a product measure, \textit{e.g.} the Lebesgue measure or another probability measure) or the function $f(x) = \Pbb(X=x)$ for discrete random variables (considered as a density with respect to a discrete measure). We assume that we are given independent and identically distributed samples of $X$, which is a classical setting in learning or statistics.

The approximation of a multivariate function $f(x_1,\hdots,x_d)$ is a challenging problem when the dimension $d$ is large or when the available information on the distribution (evaluations of the density function or samples from the distribution) is limited. This requires to introduce model classes (or hypothesis sets) that exploit low-dimensional structures of the function.  
Typical model classes for high dimensional density approximation include
\begin{itemize}
	\item multiplicative models $g^1(x_1)\cdots g^d(x_d)$, which correspond to the hypothesis that the components of $X$ are independent,
	\item generalized multiplicative models $\prod_{\alpha\in T} g^\alpha(x_\alpha)$, where $T$ is a collection of subsets $\alpha \subset \{1,\hdots,d\}$ and  $x_\alpha$ denotes the corresponding group of variables. These include bayesian networks or more general graphical models,
	\item mixture models $\sum_{k=1}^K \gamma_k g_k(x)$ with $\sum_{k=1}^K {\gamma_k} = 1$ and the $g_k$ in suitable model classes (possibly of different types). For example, a mixture of multiplicative models takes the form 
	\begin{equation}\label{eq:mixtureOfMultiplicativeModels}
	\sum_{k=1}^K \gamma_k g_k^1(x_1) \cdots g_k^d(x_d).
	\end{equation}
\end{itemize}
Here, we consider the model classes of rank-structured functions, widely used in data analysis, signal and image processing and numerical analysis.
The mixture of multiplicative models \eqref{eq:mixtureOfMultiplicativeModels} is a particular case of rank-structured functions associated with the canonical notion of rank. Other classes of rank-structured approximations that have better approximation power and that are more amenable for numerical computations can be defined by considering different notions of rank. 
For any subset $\alpha$ of $\{1,\hdots,d\} \coloneqq D$, 
a natural notion of $\alpha$-rank of a function $g(x)$, denoted by $\rank_\alpha(g)$, can be defined as the minimal integer
$r_\alpha$ such that 
\begin{equation*}
g(x) = \sum_{k=1}^{r_\alpha} g^\alpha_k(x_\alpha) g_k^{\alpha^c}(x_{\alpha^c})
\end{equation*}
for some functions of two complementary groups of variables $x_\alpha$ and $x_{\alpha^c}$, $\alpha^c := D\setminus \alpha$.
By considering a collection $T$ of subsets of $D$ and a tuple ${{r}} = (r_\alpha)_{\alpha \in T}$, a model class $\Tc^T_{{r}}(\Hc)$ of rank-structured functions can then be defined as 
\begin{equation*}
\Tc^T_{{r}}(\Hc) = \{g \in \Hc : \rank_{\alpha}(g) \le r_\alpha , \alpha \in T\},
\end{equation*}
where $\Hc$ is some function space.
The particular case where $T$ is a dimension partition tree over $D$ corresponds to the model class of 
functions in tree-based tensor format \cite{Falco2018SEMA}, which includes the Tucker format for a trivial tree, the hierarchical tensor format \cite{hackbusch2009newscheme} for a balanced binary tree, and the tensor train format \cite{oseledets2009breaking,OSE11} for a linear tree. These model classes are also known as tree tensor networks.
This particular choice for $T$ provides $\Tc^T_r(\Hc)$ with nice topological properties \cite{Falco2018SEMA} and geometrical  properties \cite{uschmajew2013geometry,falco2020geometry,Falco2019,Holtz:2012fk}. Also, and more importantly from a practical point of view, elements of $\Tc^T_{{r}}(\Hc)$ admit explicit and numerically stable representations. The complexity of these representations is linear in $d$ and polynomial in the ranks, with a polynomial degree depending on the arity of the tree. Model classes of tree tensor networks have a high approximation power \cite{Ali2020ApproximationWTpartI,Ali2020ApproximationWTpartII,ali2021approximation}. For more details on tree tensor networks and their applications, the reader is referred to the monograph \cite{hackbusch2012book} and surveys \cite{KOL09,Khoromskij:2012fk,Grasedyck:2013,nouy2017handbook,nouy:2017_morbook,bachmayr2016tensor}.

The outline of the article is as follows.  In Section \ref{sec:TBTFormats}, we present the model class of multivariate functions in tree-based tensor formats, a particular case of tensor networks associated with a dimension partition tree. In Section \ref{sec:representationOfProbabilisticModels}, we discuss the representation of probabilistic models in tensor formats, with a focus on classical models such as Markov processes, graphical models and mixtures models. Section \ref{sec:learningWithTBT} describes learning algorithms with tree-based tensor formats in a classical empirical risk minimization setting. In such a setting, selecting optimal tree-based ranks and optimal dimension trees (in the sense that it gives the best performance in terms of the complexity or number of samples) are challenging combinatorial problems for which several heuristic approaches have been proposed. Here, we present strategies for the 
adaptation of ranks and tree, and a possible exploitation of the sparsity in the tensors of the tensor networks. 
 Finally, Section \ref{sec:numericalExamples} presents numerical experiments that show the performances of the proposed algorithms.
	
Note that tree-based tensor formats have been recently considered for the approximation of high-dimensional probability distributions in different settings where the probability density function is known (possibly up to some normalizing constant in Bayesian inference) \cite{Eigel_2018,Dolgov:2020vm} or solution of a variational problem \cite{eigel2020lowrank}. Here, we consider the problem of estimating a distribution in a different setting where i.i.d. samples from the distribution are available. When one is interested in moments of a distribution $\mu$ or the expectation $\int f d\mu$ of some function $f$, there is in general no significant benefit of  computing these quantities from an estimation of the distribution compared  to a direct Monte-Carlo estimation of these quantities. However, obtaining an explicit estimation of the distribution that can be easily manipulated and simulated (that is the case with tree-based tensor formats) can be useful for many purposes such as the generation of new (synthetic) data from the distribution (an alternative to generative adversarial networks), the computation of level-sets of the density (e.g., for anomaly detection) or the computation of distances or divergences between measures.

\section{Tree-based tensor formats}
    \label{sec:TBTFormats}
    We consider the space $L_\mu^2(\Xc)$ of square integrable functions defined on a product set $\Xc = \Xc_1\times \cdots \times \Xc_d $ equipped with a product measure $\mu = \mu_1\otimes \cdots \otimes \mu_d$. 
    For $1\le \nu\le d$, we consider spaces  $\Hc_\nu$ of functions in $L^2_{\mu_\nu}(\Xc_\nu)$ and the algebraic 
    tensor space $\Hc = \Hc_1 \otimes \cdots \otimes \Hc_d$ composed by functions of the form 
    \begin{align}\label{eq:canonicalTensor}
        \sum_{k=1}^r g_k^1(x_1) \cdots g_k^d(x_d)
    \end{align} 
    for some $r \in \Nbb$ and functions $g_k^\nu\in \Hc_\nu$. A function $g^1(x_1) \cdots g^d(x_d) := (g^1\otimes \cdots \otimes g^d)(x) $ is called an elementary tensor. 
    When spaces $\Hc_\nu$ are infinite dimensional, a tensor Banach space is obtained by the completion of $\Hc$ with respect to a certain norm. If $\Hc_\nu = L^2_{\mu_\nu}(\Xc_\nu)$, then we obtain the  space $L^2_\mu(\Xc)$ after completion with respect to its natural norm.  Hereafter, we assume that spaces $\Hc_\nu$ are equipped with the canonical norm in $L^2_{\mu_\nu}(\Xc)$ and that $\Hc$ is equipped with the canonical norm in $L^2_\mu(\Xc).$ 

    The \emph{canonical rank} of a function $g \in \Hc$ is the minimal integer $r$ such that $g$ can be written in the form \eqref{eq:canonicalTensor}.
    The set of functions in $\Hc$ with canonical rank bounded by $r$ is denoted by $\Rc_r(\Hc)$. An approximation in $\Rc_r(\Hc)$ is called an approximation in \emph{canonical tensor format}.
    For an order-two tensor ($d=2$), the canonical rank coincides with the classical and unique notion of rank.
    For higher-order tensors ($d\geq3$), there exist different and natural notions of rank that will be introduced for defining tree-based tensor formats. 

	\subsection{\texorpdfstring{$\alpha$}{alpha}-ranks}
    	For a non-empty subset $\alpha$ in $\{1,\hdots,d\}:= D$ and its complementary subset $\alpha^c = D\setminus \alpha$, a tensor $g\in \Hc$ can be identified with an element $\Mc_\alpha(g)$ of the space  of order-two tensors $\Hc_\alpha \otimes \Hc_{\alpha^c}$, where $\Hc_\alpha = \bigotimes_{\nu\in \alpha} \Hc_\nu$. This is equivalent to identifying the function $g(x)$ with a bivariate function of the complementary groups of variables $x_\alpha = (x_\nu)_{\nu\in \alpha}$ and $x_{\alpha^c} = (x_\nu)_{\nu\in \alpha^c}$ in $x$. The operator $\Mc_\alpha$ is called the $\alpha$-matricization operator. 
        If $\rank_\alpha(g) = r_\alpha$, then $g$ admits the representation 
    	\begin{equation} \label{eq:repres-alpha-rank}
    	    g(x) = \sum_{i=1}^{r_\alpha} g^\alpha_i(x_\alpha) g^{\alpha^c}_i(x_{\alpha^c}),
        \end{equation}
    	for some functions $g^\alpha_i \in \Hc_\alpha$ and $g^{\alpha^c}_i \in \Hc_{\alpha^c}$. 
		A non-zero function $g$ is such that $\rank_D(g) = 1$.
		The above definition of $\alpha$-ranks yields the following properties.
		\begin{proposition}\label{prop:alpha-rank-union}
			If $\alpha = \bigcup_{i \in I} \beta_i$ with $\{\beta_i:i\in I\}$ a collection of disjoint subsets of $D$, then for any function $g$, 
			\begin{equation*}
				\rank_{\alpha}(g) \le \prod_{i\in I} \rank_{\beta_i}(g).
			\end{equation*}
		\end{proposition}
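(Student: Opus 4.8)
The plan is to reduce to the case of two sets and then iterate. Writing $\alpha = \bigcup_{i\in I}\beta_i$ and singling out one block, say $\alpha = \gamma \cup \beta$ with $\beta = \beta_{i_0}$ and $\gamma = \bigcup_{i \neq i_0}\beta_i$ disjoint from $\beta$, it suffices to establish the binary inequality $\rank_{\gamma \cup \beta}(g) \le \rank_{\gamma}(g)\,\rank_{\beta}(g)$; the general claim then follows by induction on $|I|$, since $\rank_{\gamma}(g) \le \prod_{i \neq i_0}\rank_{\beta_i}(g)$ by the induction hypothesis (the case $|I|=1$ being trivial). So the whole problem rests on the two-set case.

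For the binary case, set $r_1 = \rank_{\beta_1}(g)$ and $r_2 = \rank_{\beta_2}(g)$, and write $\delta = (\beta_1 \cup \beta_2)^c$, so that the variables split into the three disjoint groups $x_{\beta_1}, x_{\beta_2}, x_\delta$. I start from a minimal representation along $\beta_1$,
\[
g(x) = \sum_{i=1}^{r_1} u_i(x_{\beta_1})\, v_i(x_{\beta_2}, x_\delta),
\]
and observe that because $r_1$ is minimal the factors $u_1,\dots,u_{r_1} \in \Hc_{\beta_1}$ are linearly independent. Likewise I fix a minimal representation along $\beta_2$ and let $w_1,\dots,w_{r_2} \in \Hc_{\beta_2}$ be a basis of the span of its $\beta_2$-factors, so that $g \in \Hc_{\beta_1}\otimes \mathrm{span}\{w_k\}\otimes \Hc_\delta$.

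The key step is to show that each $v_i$ already lives in $\mathrm{span}\{w_k\}\otimes \Hc_\delta$. To do this I choose linear forms $\phi_1,\dots,\phi_{r_1}$ on $\mathrm{span}\{u_i\}$ dual to the $u_i$ (possible precisely because the $u_i$ are independent and their span is finite-dimensional) and apply $\phi_i$ in the $x_{\beta_1}$ variable to both representations of $g$. Applied to the first representation this extracts $v_i$, while applied to the second it produces an element of $\mathrm{span}\{w_k\}\otimes\Hc_\delta$. Hence $v_i(x_{\beta_2},x_\delta) = \sum_{k=1}^{r_2} w_k(x_{\beta_2})\,h_{ik}(x_\delta)$ for suitable $h_{ik}$. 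Substituting back gives
\[
g(x) = \sum_{i=1}^{r_1}\sum_{k=1}^{r_2} \big(u_i(x_{\beta_1})\,w_k(x_{\beta_2})\big)\, h_{ik}(x_\delta),
\]
a representation separating $x_{\beta_1\cup\beta_2}$ from $x_\delta$ with at most $r_1 r_2$ terms; therefore $\rank_{\beta_1\cup\beta_2}(g) \le r_1 r_2$.

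I expect the extraction step — showing that the complementary factors $v_i$ are confined to the finite-dimensional subspace $\mathrm{span}\{w_k\}$ in the $x_{\beta_2}$ direction — to be the only genuine obstacle; the rest is bookkeeping with the grouping of variables. An alternative, more structural route avoids the explicit duality argument by working with the minimal subspaces $U_{\beta_i}^{\min}(g)$ (the ranges of the matricizations $\Mc_{\beta_i}(g)$): one shows that $g$ belongs to $\big(\bigotimes_{i\in I} U_{\beta_i}^{\min}(g)\big)\otimes \Hc_{\alpha^c}$ using the subspace intersection identity $(U_1\otimes \Hc_{\beta_2}\otimes\Hc_\delta)\cap(\Hc_{\beta_1}\otimes U_2\otimes \Hc_\delta) = U_1\otimes U_2\otimes \Hc_\delta$, and then reads off $\rank_{\alpha}(g) = \dim U_{\alpha}^{\min}(g) \le \prod_{i\in I}\dim U_{\beta_i}^{\min}(g) = \prod_{i\in I}\rank_{\beta_i}(g)$. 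Both routes hinge on the same finite-dimensionality fact, so I would present whichever keeps the notation lightest.
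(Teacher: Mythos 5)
The paper states this proposition without proof --- it is invoked as a standard property of $\alpha$-ranks, equivalent to the bound $\dim U_\alpha^{\min}(g)\le\prod_{i\in I}\dim U_{\beta_i}^{\min}(g)$ on minimal subspaces from the tensor literature (Falc\'o--Hackbusch, cited in the bibliography) --- so there is no in-paper argument to compare against. Your proof is correct and is essentially that standard argument: the reduction to the binary case by induction on $\#I$ is sound, and the binary case is handled by the classical dual-functional extraction showing that the complementary factors of a minimal $\beta_1$-representation lie in $\mathrm{span}\{w_k\}\otimes\Hc_\delta$. The only point worth making explicit is that the functionals $\phi_i$, defined on the finite-dimensional span of the $u_j$, must be extended to all of $\Hc_{\beta_1}$ (immediate here, since $\Hc_{\beta_1}$ carries an inner product: compose the dual basis with the orthogonal projection onto $\mathrm{span}\{u_j\}$), so that $\phi_i\otimes\mathrm{id}$ is a well-defined linear map on the whole tensor space whose value on $g$ does not depend on the chosen representation; this is what licenses evaluating it once on the $\beta_1$-minimal representation (yielding $v_i$) and once on the membership $g\in\Hc_{\beta_1}\otimes\mathrm{span}\{w_k\}\otimes\Hc_\delta$ (yielding an element of $\mathrm{span}\{w_k\}\otimes\Hc_\delta$). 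Your alternative route via the intersection property of minimal subspaces is the formulation usually found in the literature and hinges on exactly the same finite-dimensionality; either version is a complete proof.
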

        \begin{proposition}\label{prop:alpha-rank-operations}
            For two functions $g$ and $h$, and for any $\alpha\subset D$,
            \begin{itemize}
                \item $\rank_\alpha(g+h) \le \rank_\alpha(g) + \rank_\alpha(h)$,
                \item $\rank_\alpha(gh) \le \rank_\alpha(g)\rank_\alpha(h)$.
            \end{itemize}
        \end{proposition}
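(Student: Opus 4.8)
The plan is to read off both inequalities directly from the definition of the $\alpha$-rank through the representation \eqref{eq:repres-alpha-rank}, exploiting that the $\alpha$-matricization $\Mc_\alpha$ turns $g$ into an order-two tensor in $\Hc_\alpha \otimes \Hc_{\alpha^c}$ and that the $\alpha$-rank is the number of terms in a minimal separated representation. First I would dispose of the trivial case where either $\rank_\alpha(g)$ or $\rank_\alpha(h)$ is infinite, since then the asserted bounds are vacuous; so assume $\rank_\alpha(g) = p$ and $\rank_\alpha(h) = q$ are finite and fix minimal representations
\[
g = \sum_{i=1}^{p} g_i^\alpha\, g_i^{\alpha^c}, \qquad h = \sum_{j=1}^{q} h_j^\alpha\, h_j^{\alpha^c},
\]
where each first factor depends only on $x_\alpha$ and each second factor only on $x_{\alpha^c}$.

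For the subadditivity bound I would simply concatenate the two sums: adding the representations term by term yields an expression for $g+h$ as a sum of $p+q$ elementary tensors with respect to the partition $\{\alpha,\alpha^c\}$. Since $\rank_\alpha(g+h)$ is by definition the minimal number of terms in such a representation, this immediately gives $\rank_\alpha(g+h) \le p+q$.

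For the product bound I would expand the product of the two sums and regroup the factors according to their dependence on $x_\alpha$ or $x_{\alpha^c}$, obtaining
\[
gh = \sum_{i=1}^{p}\sum_{j=1}^{q} \bigl(g_i^\alpha h_j^\alpha\bigr)\bigl(g_i^{\alpha^c} h_j^{\alpha^c}\bigr).
\]
The key observation is that, because $x_\alpha$ and $x_{\alpha^c}$ are complementary groups of variables, pointwise multiplication respects the factorization: $g_i^\alpha h_j^\alpha$ is a function of $x_\alpha$ alone and $g_i^{\alpha^c} h_j^{\alpha^c}$ is a function of $x_{\alpha^c}$ alone, so each of the $pq$ summands is again an elementary tensor for the partition $\{\alpha,\alpha^c\}$. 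Minimality of $\rank_\alpha(gh)$ then yields $\rank_\alpha(gh) \le pq$.

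The only real point requiring care, rather than a genuine obstacle, is a well-definedness issue: one must ensure that the factorwise products $g_i^\alpha h_j^\alpha$ and $g_i^{\alpha^c} h_j^{\alpha^c}$ actually belong to the spaces $\Hc_\alpha$ and $\Hc_{\alpha^c}$ in which the rank is measured. In the algebraic tensor space this is automatic; in the $L^2$ completion it presupposes that the relevant component spaces are stable under multiplication so that the products remain square-integrable, which is the hypothesis that should be stated explicitly. Modulo this bookkeeping, both claims reduce to the elementary subadditivity and submultiplicativity of the number of terms in a separated representation, so I do not expect any substantive difficulty.
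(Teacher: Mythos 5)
The paper states this proposition without proof, treating it as an elementary fact, and your argument is exactly the standard one it implicitly relies on: concatenating minimal separated representations gives subadditivity, and expanding the product of two separated sums and regrouping the $\alpha$- and $\alpha^c$-factors gives submultiplicativity. Your closing caveat about the products $g_i^\alpha h_j^\alpha$ remaining in the ambient space is a fair observation that the paper glosses over (it matters only in the infinite-dimensional $L^2$ setting; in the finite-dimensional or algebraic setting used throughout the paper it is automatic), so there is nothing to correct.
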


        \begin{example}\leavevmode
        	\begin{itemize}
                \item $g(x) = g^1(x_1) \cdots g^d(x_d)$ can be written $g(x) = g ^\alpha(x_\alpha)  g^{\alpha^c}(x_{\alpha^c})$, where $g^\alpha(x_\alpha) = \prod_{\nu\in \alpha} g^\nu(x_\nu)$. Therefore $\rank_{\alpha}(g) \le 1$ for all $\alpha\subset D$.
                \item $g(x) = \sum_{k=1}^r g^1_k(x_1) \cdots g^d_k(x_d)$ can be written $\sum_{k=1}^r g^\alpha_k(x_\alpha)  g^{\alpha^c}_k(x_{\alpha^c})$ with $g^\alpha_k(x_\alpha) = \prod_{\nu\in \alpha} g^\nu_k(x_\nu)$. Therefore, $\rank_{\alpha}(g) \le r$ for all $\alpha \subset D$.
                \item $g(x) = g^1(x_1) + \cdots + g^d(x_d)$ can be written $g(x ) = g^\alpha(x_\alpha)  + g^{\alpha^c}(x_{\alpha^c})$, with 
                $g^\alpha(x_\alpha) = \sum_{\nu\in \alpha} g^\nu(x_\nu)$. Therefore, $\rank_\alpha(g) \le 2$ for all $\alpha \subset D$.
                \item $g(x) = \prod_{\alpha\in A} g^\alpha(x_\alpha)$ with $A$  a collection of disjoint subsets is such that 
                $\rank_\alpha(g) = 1$ for all $\alpha \in A$, and $\rank_\gamma(g) \le \prod_{\alpha\in A, \alpha \cap \gamma \neq \emptyset} \rank_{\alpha \cap \gamma}(g^\alpha)$ for all $\gamma$. 
            \end{itemize}
        \end{example}

    \subsection{Tree-based tensor formats}
    
    	For a collection $T$ of non-empty subsets of $D$, we define the $T$-rank of $g$ as the tuple $ \rank_T(g)=\{\rank_\alpha(g) : \alpha \in T\}$. Then, we define the set of tensors $\Tc_r^T(\Hc)$ with $T$-rank bounded by $r=(r_\alpha)_{\alpha \in T}$ by
    	\begin{equation*}
    	    \Tc_r^T(\Hc) = \left\{g \in \Hc : \rank_T(g) \le r \right\}.
    	\end{equation*}
    	A \emph{dimension partition tree} $T$ is a tree such that 
    	(i) all nodes $\alpha \in T$ are non-empty subsets of $D$, (ii) $D$ is the root of $T$, (iii) every node $\alpha \in T$ with $\#\alpha \ge 2$ has at least two children and the set of children of $\alpha$, denoted by $S(\alpha)$, is a non-trivial partition of $\alpha$, and (iv) every node $\alpha$ with $\#\alpha = 1$ has no child and is called a leaf (see the example in Figure \ref{fig:tree}). 
    	The level of a node $\alpha$ is denoted by $\level(\alpha)$. The levels are defined such that $\level(D) = 0$ and $\level(\beta) = \level(\alpha) + 1$ for $\beta \in S(\alpha)$. We let $\depth(T) = \max_{\alpha \in T} \level(\alpha)$ be the depth of $T$, and $\Lc(T)$ be the set of leaves of $T$, which are such that $S(\alpha) = \emptyset$ for all $\alpha \in \Lc(T)$.
    		 	
        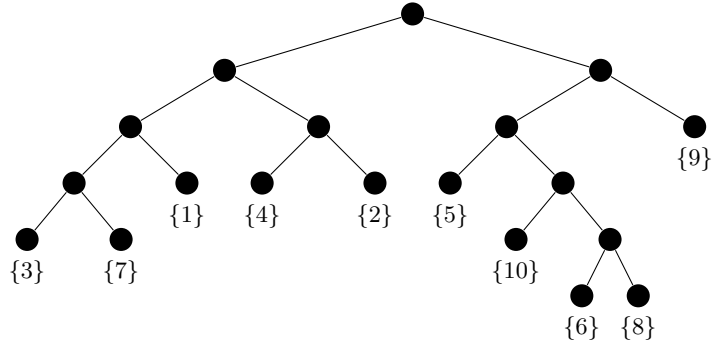
\begin{figure}[ht]
        \centering
        \footnotesize
        \begin{tikzpicture}[scale=0.5]  
            \tikzstyle{level 1}=[sibling distance=100mm]  \tikzstyle{level 2}=[sibling distance=50mm]  \tikzstyle{level 3}=[sibling distance=30mm]  \tikzstyle{level 4}=[sibling distance=25mm]  \tikzstyle{level 5}=[sibling distance=15mm]
            \node [node,label=above:{}]  {} child{node [node,label=above:{}] {}child{node [node,label=above:{}] {}child{node [node,label=above:{}] {}child{node [node,label=below:{$\{3\}$}] {}}child{node [node,label=below:{$\{7\}$}] {}}}child{node [node,label=below:{$\{1\}$}] {}}}child{node [node,label=above:{}] {}child{node [node,label=below:{$\{4\}$}] {}}child{node [node,label=below:{$\{2\}$}] {}}}}child{node [node,label=above:{}] {}child{node [node,label=above:{}] {}child{node [node,label=below:{$\{5\}$}] {}}child{node [node,label=above:{}] {}child{node [node,label=below:{$\{10\}$}] {}}child{node [node,label=above:{}] {}child{node [node,label=below:{$\{6\}$}] {}}child{node [node,label=below:{$\{8\}$}] {}}}}}child{node [node,label=below:{$\{9\}$}] {}}};\end{tikzpicture}		        
        \caption{Example of a dimension partition tree over $D=\{1,\hdots,10\}$.}.
        \label{fig:tree}
        \end{figure}
    
        When $T$ is a dimension partition tree, $\Tc_r^T(\Hc)$ is the set of tensors with \emph{tree-based rank} bounded by $r$, and an approximation in $\Tc_r^T(\Hc)$ is called an approximation in \emph{tree-based (or hierarchical) tensor format} \cite{hackbusch2009newscheme,Falco2018SEMA}. The trivial tree of depth $1$ corresponds to the Tucker format.  A balanced tree of depth $O(\log_2(d))$ corresponds to the classical hierarchical tensor format. A linear tree with depth $d-1$ corresponds to the tensor train format.
    	
        A tree-based rank $r$ is said admissible if $\Tc_r^T(\Hc) \neq \emptyset$. Necessary conditions of admissibility can be found in \cite[Section 11.2.3]{hackbusch2012book}. In particular, $r_D$ has to be less than or equal to $1$ for  $\Tc_r^T(\Hc)$ to be non empty, and $\Tc_r^T(\Hc)$ is reduced to $\{0\}$ if $r_D=0$. 
    
    \subsection{Representation of tensors in tree-based format}\label{sec:representation}
        We consider that the $\Hc_\nu$ are finite dimensional subspace of $L^2_{\mu_\nu}(\Xc_\nu)$, $1\le \nu\le d$ (\textit{e.g.} a space of polynomials with bounded degree, a space of wavelets with finite resolution). Then, $\Hc$ is a finite-dimensional subspace of $L^2_{\mu}(\Xc)$. Let $\{\phi^\nu_{i} : i\in I^\nu \}$ be a basis of $\Hc_\nu$, and $N_\nu = \dim(\Hc_\nu) = \#I^\nu$. 
    	For a multi-index $i = (i_1,\hdots,i_d)  \in   I^1\times \cdots \times I^d := I$, we let $\phi_i(x) = \phi^1_{i_1}(x_1) \cdots  \phi^d_{i_d}(x_d) $. Then the set of functions $\{\phi_i : i\in I\}$ is a basis of $\Hc$ and any function $g \in \Hc$ can be written 
        \begin{equation*}
            g(x) = \sum_{i_1 \in I^1} \cdots \sum_{i_d \in I^d} g_{i_1,\hdots,i_d} \phi^1_{i_1}(x_1) \cdots \phi^d_{i_d}(x_d)  = \sum_{i \in I} g_i \phi_i(x)
        \end{equation*}
        and identified with the set of coefficients $\mathbf{g} \in \Rbb^{I^1\times \cdots \times I^d} = \Rbb^I$.
        This defines a linear bijection $G : \Rbb^I \to \Hc$ such that $g = G(\mathbf{g}).$ Also, we use the notation
        \begin{equation*}
        	g(x) =  \langle \boldsymbol{ \Phi}(x), \mathbf{g}  \rangle,
        \end{equation*}
        where $\langle \cdot,\cdot\rangle$ is the canonical inner product on $\ell^2(\Rbb^I)$ and $\boldsymbol\Phi(x) \in \Rbb^I$ is defined by  \begin{equation*}\boldsymbol\Phi(x) = \boldsymbol\Phi^1(x_1) \otimes \cdots \otimes \boldsymbol\Phi^d(x_d),\end{equation*} with $\boldsymbol \Phi^\nu(x_\nu) = (\phi^\nu_{i_\nu}(x_\nu))_{i_\nu \in I^\nu} \in \Rbb^{I^\nu}.$
        If $\{\phi^\nu_{i} : i\in I^\nu \}$ is an orthonormal basis of $\Hc_\nu$, then $\{\phi_i : i\in I\}$ is an orthonormal basis of $\Hc$, and the $L^2_\mu$-norm of $g$ coincides with the canonical (Frobenius) norm $\Vert \mathbf{g} \Vert = \langle \mathbf{g},  \mathbf{g} \rangle^{1/2}$ of the tensor $\mathbf{g}$, \textit{i.e.} $ \Vert g \Vert^2_{L^2_\mu} = \Vert \mathbf{g} \Vert^2$. In this case, the map $G$ is a linear isometry. 
        
        A function $g \in \Tc_r^T(\Hc)$ is identified with a tensor $\mathbf{g} = G^{-1}(g)$ with entries
        \begin{equation*}
            g_{i_1,\hdots,i_d} = \sum_{\substack{1 \leq k_\alpha \leq r_\alpha \\ \alpha \in T}} \prod_{\alpha \in T\setminus \Lc(T)} C^\alpha_{(k_\beta)_{\beta \in S(\alpha)},k_\alpha} \prod_{\alpha \in \Lc(T)}  C^\alpha_{i_\alpha,k_\alpha}  \label{tensor-g-explicit}
        \end{equation*}
        where  
        \begin{equation*}
	        C^\alpha \in \Rbb^{K^\alpha}, \quad K^\alpha  := I^\alpha \times \{1,\hdots,r_\alpha\},
        \end{equation*}
        with
        $I^\alpha = \times_{\beta \in S(\alpha)} \{1,\hdots,r_\beta\}$ for $\alpha \notin \Lc(T)$. 
        
        The storage complexity (number of parameters) of a function $g$ in $\Tc_{r}^T(\Hc)$ is
        \begin{equation*}
            C(T,r) = \sum_{\alpha \in T\setminus \Lc(T)} r_\alpha \prod_{\beta\in S(\alpha)} r_\beta + \sum_{\alpha \in \Lc(T)} \dim(\Hc_\alpha) r_\alpha.
        \end{equation*}
        If $r_\alpha = O(R)$ and $\dim(\Hc_\alpha) = O(N)$, then since $\#T = O(d)$, $C(T,r) = O(dNR + (\#T-d-1) R^{s+1}  + R^s)$, where $s = \max_{\alpha \in T\setminus \Lc(T)} \#S(\alpha)$ is the arity of the tree. For a binary tree, $s=2$ and $\#T= 2d-1$, so that $C(T,r) = O(dNR + (d-2) R^3 + R^2)$.

		\begin{remark}[Discrete set $\Xc$]\label{rmk:spaces-discrete}
			If $\Xc = \Xc_1 \times \cdots \times \Xc_d$ is a finite or countable set and $\mu = \sum_{x\in \Xc} \delta_x = \mu_1 \otimes \cdots \otimes \mu_d $ with $\mu_\nu =   
			\sum_{x_\nu \in \Xc_\nu} \delta_{x_\nu},$ then the space $\Hc_\nu = L^2_{\mu_\nu}(\Xc_\nu)$ is identified with $\ell^2(\Xc_\nu)$.  
			If $\Xc_{\nu} = \{x_\nu^{i_\nu} : i_\nu\in I^\nu \}$,  the canonical basis $\phi_{i_\nu}^\nu(x^{j_\nu}_\nu) = \mathbf{1}_{i_\nu = j_\nu}$ (which is equal to $1$ if $i_\nu = j_\nu$ and $0$ otherwise) is orthonormal in $\Hc_\nu$. A function $g(x) = \sum_{i\in I} {g_i} \phi_i(x)$ is then isometrically identified with the set of coefficients 
			$g_{i_1,\hdots,i_d} = g(x^{i_1}_1,\hdots,x^{i_d}_d)$.
		\end{remark}

\section{Representation of probabilistic models in tensor formats}
\label{sec:representationOfProbabilisticModels}
    In this section, we first discuss different types of representation of a probability distribution using tree-based tensor formats, and provide some results on the relations between the ranks of these representations. Then we provide several examples of standard probabilistic models, and discuss their representation in tree-based tensor format. 
 
 \subsection{Representation of a probability distribution}
 The probability distribution of the random variable $X = (X_1,\hdots,X_d)$ is characterized by its cumulative distribution function 
 $
 F(x) = \Pbb(X\le x). 
 $
In the following we assume that the distribution admits a density 
$f(x)$ with respect to a product measure $\mu = \mu_1 \otimes \cdots \otimes \mu_d$ on $\Rbb^d$ (\textit{e.g.} the Lebesgue measure), such that 
   \begin{equation*}
            F(x) = \int_{\{t\le x\}} f(t) d\mu(t).
        \end{equation*}
 This includes the case of a discrete random variable taking values in a finite or countable set  $\Xc = \Xc_1 \times \cdots \times \Xc_d$, with measure $\rho = \sum_{x \in \Xc} \Pbb(X = x) \delta_x $, by letting $f(x) := \Pbb(X = x)$ and $\mu :=  \sum_{x \in \Xc} \delta_x.$ In this case, $f$ is identified with an element of $\Rbb^\Xc = \Rbb^{\Xc_1 \times \cdots \times \Xc_d} $. 
 \begin{proposition}\label{prop:cdf}
 Assume that the distribution $F$ admits a density $f$ with respect to a product measure $\mu$. Then for any $\alpha\subset D$, 
 $$
 \rank_\alpha(F) \le \rank_\alpha(f).
 $$
 Moreover, if $\mu$ is the Lebesgue measure, 
 $$
 \rank_\alpha(F) = \rank_\alpha(f).
 $$
 \end{proposition}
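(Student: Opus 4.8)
The plan is to prove the two statements separately: the general inequality follows from the product structure of the integration that defines $F$, and the reverse inequality (under the Lebesgue assumption) follows from inverting that integration by differentiation.

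First I would establish $\rank_\alpha(F) \le \rank_\alpha(f)$ for an arbitrary product measure $\mu$. Setting $r = \rank_\alpha(f)$ and fixing a representation $f(x) = \sum_{i=1}^r f^\alpha_i(x_\alpha) f^{\alpha^c}_i(x_{\alpha^c})$ as in \eqref{eq:repres-alpha-rank}, I would use that $\mu = \mu_\alpha \otimes \mu_{\alpha^c}$ with $\mu_\alpha = \bigotimes_{\nu\in\alpha}\mu_\nu$, together with the factorization of the integration domain $\{t \le x\} = \{t_\alpha \le x_\alpha\} \times \{t_{\alpha^c} \le x_{\alpha^c}\}$. By Fubini's theorem the integral defining $F$ splits, yielding
$$F(x) = \sum_{i=1}^r F^\alpha_i(x_\alpha)\, F^{\alpha^c}_i(x_{\alpha^c}), \qquad F^\alpha_i(x_\alpha) = \int_{\{t_\alpha\le x_\alpha\}} f^\alpha_i \, d\mu_\alpha ,$$
and analogously for $F^{\alpha^c}_i$. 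This exhibits $F$ as a sum of $r$ products of functions of the complementary variable groups, so $\rank_\alpha(F) \le r$.

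For the equality when $\mu$ is the Lebesgue measure, I would run the same argument in reverse. In this case $F$ is absolutely continuous and $f = \partial_{x_1}\cdots\partial_{x_d} F$ holds almost everywhere, and the mixed-derivative operator factorizes as $\partial_\alpha \otimes \partial_{\alpha^c}$ with $\partial_\alpha = \prod_{\nu\in\alpha}\partial_{x_\nu}$, since the two variable groups are disjoint. Starting from a minimal decomposition $F = \sum_{j=1}^s F^\alpha_j\, F^{\alpha^c}_j$ with $s = \rank_\alpha(F)$ and differentiating termwise, I would obtain $f = \sum_{j=1}^s (\partial_\alpha F^\alpha_j)(\partial_{\alpha^c} F^{\alpha^c}_j)$, an $\alpha$-rank representation of $f$ with $s$ terms, hence $\rank_\alpha(f) \le s$. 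Combined with the first part, this gives equality.

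The delicate point, which I expect to be the main obstacle, is justifying the termwise differentiation: the factors $F^\alpha_j$ of an arbitrary minimal decomposition of $F$ are a priori only elements of a function space and need not be classically differentiable. I would resolve this by observing that in any minimal decomposition the factors $\{F^\alpha_j\}$ span the range of the matricization $\Mc_\alpha(F)$, which is the image under partial integration of the range of $\Mc_\alpha(f)$; each $F^\alpha_j$ is therefore an indefinite integral of an $L^2$ function, hence absolutely continuous, so the fundamental theorem of calculus legitimizes the differentiation and recovers the corresponding pre-image. Equivalently, and more robustly, one can argue entirely at the level of matricizations: writing $\Mc_\alpha(F) = \mathcal{J}_\alpha\, \Mc_\alpha(f)\, \mathcal{J}_{\alpha^c}^{\ast}$, where $\mathcal{J}_\alpha$ and $\mathcal{J}_{\alpha^c}$ are the one-sided integration operators on each variable group, composition with a linear operator can only lower the rank (giving the inequality), while injectivity of these integration operators — concretely realized by differentiation in the Lebesgue case — preserves it (giving the equality).
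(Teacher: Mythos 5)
Your proof follows essentially the same route as the paper: term-by-term cumulative integration of a minimal decomposition of $f$ gives $\rank_\alpha(F)\le\rank_\alpha(f)$, and term-by-term mixed partial differentiation of a minimal decomposition of $F$ gives the reverse inequality in the Lebesgue case. The only difference is that you explicitly justify the termwise differentiation via the minimal subspaces being ranges of the one-sided integration operators, a point the paper's proof leaves implicit; this is a sound and welcome refinement, not a change of approach.
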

\begin{proof}
If $f(x) = \sum_{k=1}^{r} f^\alpha_k(x_\alpha) f^{\alpha^c}_k(x_{\alpha^c})$, then $F(x) = \sum_{k=1}^r F^\alpha_k(x_\alpha) F^{\alpha^c}_k(x_{\alpha^c})$ with $F^\beta_k(x_\beta) = \int_{\{t_\beta \le x_\beta\}} f^\beta_k(t_\beta) d\mu_\beta(t_\beta)$ for $\beta=\alpha$ and $\alpha^c$. This implies $ \rank_\alpha(F) \le \rank_\alpha(f).$
If $\mu$ is the Lebesgue measure and $F(x) = \sum_{k=1}^r F^\alpha_k(x_\alpha) F^{\alpha^c}_k(x_{\alpha^c})$, then 
almost everywhere, $f(x) = \sum_{k=1}^{r} f^\alpha_k(x_\alpha) f^{\alpha^c}_k(x_{\alpha^c})$ with $f^\beta_k(x_\beta) = \partial_{x_{\nu_1}} \cdots \partial_{x_{\nu_{\#\beta}}}  F^\beta_k(x_\beta)$,  $\beta = \{\nu_1,\hdots, \nu_{\#\beta}\}$. This implies $\rank_\alpha(F) \ge \rank_\alpha(f).$
\end{proof}
\begin{remark}
Note that the above framework and results can be extended to the case where a random variable $X_\nu$ is either continuous or discrete,  
 by letting $\mu_\nu$ be either the Lebesgue measure or a discrete measure.
\end{remark}

 For $1 \leq \nu \leq d$, let us denote by $F_\nu : \Xc_\nu \to [0,1]$  the marginal cumulative distribution function of $X_\nu$. By Sklar's theorem, 
 there exists a copula $C :[0,1]^d \to [0,1]$ such that 
 $$
  F(x) = C(F_1(x_1),\hdots,F_d(x_d)).
 $$
 \begin{proposition}
For all $\alpha \subset D$, if $C$ is a copula of $X$, 
$$
\rank_\alpha(F) \le \rank_{\alpha}(C).
$$
If $F$ admits a density $f$ with respect to the Lebesgue measure, then $X$ admits a unique copula $C$ with density $c$ and 
$$
\rank_\alpha(F) = \rank_{\alpha}(C) = \rank_\alpha(f) = \rank_\alpha(c).
$$
 \end{proposition}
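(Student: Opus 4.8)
The plan is to establish the inequality by a direct composition argument based on Sklar's theorem, and then to obtain the chain of equalities in the density case by combining Proposition \ref{prop:cdf} with a two-sided version of the same argument. The underlying fact I would isolate first is that composing a function coordinatewise with univariate maps never increases any $\alpha$-rank, since such a composition respects the splitting of the variables into the groups $x_\alpha$ and $x_{\alpha^c}$.

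For $\rank_\alpha(F) \le \rank_\alpha(C)$, write $r = \rank_\alpha(C)$ so that $C(u) = \sum_{k=1}^r C_k^\alpha(u_\alpha) C_k^{\alpha^c}(u_{\alpha^c})$. Substituting $u_\nu = F_\nu(x_\nu)$ into Sklar's identity $F(x) = C(F_1(x_1),\hdots,F_d(x_d))$ gives $F(x) = \sum_{k=1}^r C_k^\alpha((F_\nu(x_\nu))_{\nu\in\alpha})\, C_k^{\alpha^c}((F_\nu(x_\nu))_{\nu\in\alpha^c})$. Each $F_\nu$ depends on a single variable, so the $k$-th summand factors as a function of $x_\alpha$ times a function of $x_{\alpha^c}$, whence $\rank_\alpha(F) \le r$.

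In the density case, the existence of a density $f$ with respect to the Lebesgue measure forces the marginals $F_\nu$ to be continuous, which is precisely the hypothesis of Sklar's theorem guaranteeing that the copula $C$ is unique; moreover $C$ then admits a density $c$ on $[0,1]^d$. Proposition \ref{prop:cdf} applied to $F$ yields $\rank_\alpha(F) = \rank_\alpha(f)$, and applied to $C$ (whose density with respect to the Lebesgue measure on $[0,1]^d$ is $c$) yields $\rank_\alpha(C) = \rank_\alpha(c)$. It thus remains only to prove $\rank_\alpha(C) \le \rank_\alpha(F)$, the reverse inequality already being in hand.

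This last step runs the composition argument backwards: I would invert Sklar's identity as $C(u) = F(F_1^{(-1)}(u_1),\hdots,F_d^{(-1)}(u_d))$ and substitute a representation of $F$ of rank $\rank_\alpha(F)$, again obtaining a factored sum. The main obstacle is the rigorous treatment of the marginal inverses when the $F_\nu$ are continuous but not strictly increasing, since on the zero sets of the density a marginal need not be invertible; I would therefore use the quasi-inverse $F_\nu^{(-1)}(u) = \inf\{t : F_\nu(t) \ge u\}$ and verify the inverted identity on the range of $(F_1,\hdots,F_d)$, where the copula is uniquely determined. As the quasi-inverses remain univariate coordinatewise maps, the composition argument applies verbatim and gives $\rank_\alpha(C) \le \rank_\alpha(F)$, closing the chain of equalities.
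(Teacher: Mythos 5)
Your proposal is correct and follows essentially the same route as the paper: the forward inequality by substituting $u_\nu = F_\nu(x_\nu)$ into a rank-$r$ decomposition of $C$, the reverse inequality by inverting Sklar's identity, and the remaining equalities via Proposition \ref{prop:cdf}. Your treatment of the quasi-inverses when the marginals are not strictly increasing is a welcome extra precaution that the paper's proof glosses over by writing $F_\nu^{-1}$ directly, but it does not change the argument.
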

\begin{proof}
If $C(u) = \sum_{k=1}^r C^\alpha_k(u_\alpha) C^{\alpha^c}_k(u_{\alpha^c})$, then 
$F(x) =  \sum_{k=1}^r C^\alpha_k(u_\alpha) C^{\alpha^c}_k(u_{\alpha^c})$ with $u_\nu = F_\nu(x_\nu)$. 
This implies $\rank_\alpha(F) \le \rank_\alpha(C)$. 
If $F$ admits a density with respect to the Lebesgue measure, then $C(u) = F(F^{-1}_1(u_1),\hdots,F^{-1}_d(u_d))$, and if $F(x) = \sum_{k=1}^r F^\alpha_k(x_\alpha) F^{\alpha^c}_k(x_{\alpha^c})$, then $C(u) =  \sum_{k=1}^r F^\alpha_k(x_\alpha) F^{\alpha^c}_k(x_{\alpha^c})$ with $x_\nu = F_\nu^{-1}(u_\nu)$. 
This implies $\rank_\alpha(C) \le\rank_\alpha(F),$ and therefore  $\rank_\alpha(F) = \rank_{\alpha}(C) $. 
The other equalities  are deduced  from proposition \ref{prop:cdf}.
\end{proof}

 \subsection{Encoding particular distributions in tensor formats}

	\subsubsection{Mixtures}
    	Consider a random variable $X = (X_1,\hdots,X_d)$ which is a 
        mixture of $m$ random variables $Z^i = (Z^i_{1}, \hdots ,Z^i_{d})$ with weights $\gamma_i$, $1\le i\le m$, such that $\sum_{i=1}^m \gamma_i = 1$. Let $f$ and $f^i$ denote the densities with respect to a product measure $\mu$ of the probability distributions of $X$ and $Z^i$ respectively. We have 
        \begin{equation*}
        	f(x) = \sum_{i=1}^m \gamma_i f^i(x).
        \end{equation*}
        From Proposition \ref{prop:alpha-rank-operations}, we know that for any $\alpha \subset D$, $\rank_\alpha(f) \le \sum_{i=1}^m \rank_\alpha(f^i)$, therefore, for any tree $T$, 
        $$\rank_T(f) \le \sum_{i=1}^m \rank_T(f^i).$$
        Assuming that $Z^i$ has independent components $Z^i_k$ with densities $f^i_k$, we have 
        $f^i(x) = f_1^i (x_1)\cdots f_d^i (x_d)$ with $\rank_\alpha(f^i)=1$ for any $\alpha$, and therefore, for any tree $T$, $\rank_T(f) \le m$.
      Assume now that the function $f^i$ is represented in a tree based format with tree $T^i.$
   For any $\alpha \subset D$, there exists a subset $T_\alpha^i$ of $T^i$ which forms a partition of $\alpha$, and from  Proposition \ref{prop:alpha-rank-union}, we have $\rank_{\alpha}(f^i) \le \prod_{\beta \in T_\alpha^i} \rank_{\beta}(f^i) ,$ and therefore
   $$
   \rank_\alpha(f) \le \sum_{i=1}^m \prod_{\beta \in T_\alpha^i} \rank_{\beta}(f^i) := R_\alpha.
   $$
Then a dimension tree for the representation of $f$ could be chosen to minimize the complexity $C(T,
R)$ using the above upper bound $R = (R_\alpha)_{\alpha \in T}$ of the $T$-rank of $f$.
      
      \subsubsection{Markov processes} 
        Consider a discrete time Markov process $X = (X_1,\hdots,X_d)$ whose density is given by 
        \begin{equation*}
            f(x) = f_{d \vert d-1}(x_d \vert x_{d-1}) \cdots f_{2 \vert 1}(x_2 \vert x_{1}) f_1(x_1),  
        \end{equation*}
        where $f_1$ is the density of $X_1$ and $f_{i\vert i-1}(\cdot \vert x_{i-1})$ is the density of $X_i$ knowing $X_{i-1}=x_{i-1}.$  Let $m_i$ be the rank of the bivariate function $(t,s) \mapsto f_{i\vert i-1}(t\vert s)$, $i = 2,\hdots,d$.
        
        Let $$T= \{\{1,\hdots,d\},\{1\},\hdots,\{d\},\{1,2\},\hdots,\{1,\hdots,d-1\}\}$$ be the linear tree of Figure \ref{fig:linearTreeMarkovChain}.        
        We note that $\rank_{\{1\}}(f) = \rank(f_{2\vert 1}) = m_2$, $\rank_{\{d\}}(f) = \rank(f_{d\vert d-1}) = m_d$ and for $2 \le \nu\le d-1$, $\rank_{\{\nu\}}(f) \le \rank(f_{\nu\vert \nu-1}) \rank(f_{\nu+1\vert \nu}) = m_\nu m_{\nu+1}$. Also, for $1<\nu<d$, we have that 
        $\rank_{\{1,\hdots,\nu\}}(f) = \rank(f_{\nu+1\vert \nu}).$   Letting $m=\max_{i} m_i$, we deduce that     
        $f$ has a representation in tree-based format with complexity  in $O(m^4).$
  Note that the choice of tree is here crucial. Indeed, a different ordering of variables may lead to ranks growing exponentially with the dimension $d$. 
 For instance, consider the tree $\widetilde T$ represented in Figure \ref{fig:nonOptimalLinearTreeMarkovChain}, with 
$\widetilde T = \{\sigma(\alpha) : \alpha\in T\}$ with  the permutation $$\sigma  = (1,3,\hdots, 2\left\lfloor \frac{d+1}{2} \right\rfloor-1 , 2, 4, \hdots , 2\left\lfloor \frac{d}{2} \right\rfloor).$$ 
 For $\alpha = \{1,3,\hdots,2k+1\}$, with $k \le \lfloor \frac{d+1}{2} \rfloor-1$, we have $\rank_\alpha(f) \le m_2 m_3 \cdots m_{2k+2} \le m^{2k+1}$ if $2k+1 < d$, and $\rank_\alpha(f) \le m_2 m_3 \cdots m_{2k+1} \le m^{2k}$ if $2k+1 = d$. Therefore, the representation in the corresponding tree-based format has a complexity in $O(m^{2d-2})$.
 %
        Example \ref{ex:markovChainTreeInfluence} presents a Markov process for which the tree-based rank exhibits such a behavior.
        Therefore, when the structure of the Markov process is not known, a procedure for finding a suitable tree should be used (see Section \ref{subsec:treeAdaptation} for the description of a tree adaptation algorithm). 
        
        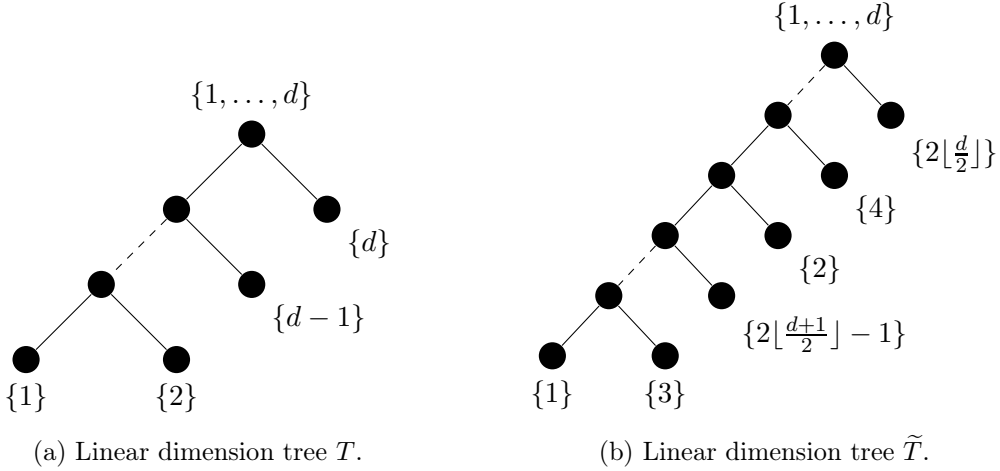
\begin{figure}[ht]
        \centering
            \begin{subfigure}[b]{.49\textwidth}
                \centering
                \begin{tikzpicture}[level distance = 10mm]
                    \tikzstyle{level 1}=[sibling distance=20mm]
                    \node [node, label=above:{$\{1,\hdots,d\}$}]  {}
                    child {node [node, label=above left:{}] {}
                        child {node [node, label=above left:{}] {} 
                            edge from parent[dashed]
                            child {node [node, label=below:{$\{1\}$}] {} edge from parent[solid]}
                            child {node [node, label=below:{$\{2\}$}] {} edge from parent[solid]}
                        }
                        child {node [node, label=below right:{$\{d-1\}$}] {}
                        }
                    }
                    child {node [node, label=below right:{$\{d\}$}] {}
                    }
                    ;
                \end{tikzpicture}
                \caption{Linear dimension tree $T$.}
                \label{fig:linearTreeMarkovChain}
            \end{subfigure}
            \begin{subfigure}[b]{.49\textwidth}
                \centering
                \begin{tikzpicture}[level distance = 8mm]
                    \tikzstyle{level 1}=[sibling distance=15mm]
                    \node [node, label=above:{$\{1,\hdots,d\}$}]  {}
                    child {node [node, label=above left:{}] {} edge from parent[dashed]
                        child {node [node, label=above left:{}] {} 
                            edge from parent[solid]
                            child {node [node, label=above left:{}] {}
                                child {node [node, label=above left:{}] {}
                                    child {node [node, label=below:{$\{1\}$}] {}edge from parent[solid]}
                                    child {node [node, label=below:{$\{3\}$}] {}edge from parent[solid]}
                                edge from parent[dashed]}
                                child {node [node, label=below right:{$\{2\lfloor \frac{d+1}{2} \rfloor -1\}$}] {}}
                            edge from parent[solid]}
                            child {node [node, label=below right:{$\{2\}$}] {} edge from parent[solid]}
                        }
                        child {node [node, label=below right:{$\{4\}$}] {}
                        edge from parent[solid]
                        }
                    }
                    child {node [node, label=below right:{$\{2\lfloor \frac{d}{2} \rfloor\}$}] {}
                    }
                    ;
                \end{tikzpicture}
                \caption{Linear dimension tree $\widetilde T$.}
                \label{fig:nonOptimalLinearTreeMarkovChain}
            \end{subfigure}
            \caption{Examples of linear dimension trees.}
        \end{figure}
    
        \begin{example}[Discrete state space Markov process]\label{ex:markovChainTreeInfluence}
            We consider the discrete time discrete state space Markov process $X = (X_1,\hdots,X_8)$, where each random variable $X_\nu$ takes values in $\Xc_\nu = \{1,\hdots,5\}$. The distribution of $X$ writes
            \begin{equation*}
                f(i_1,\hdots,i_8) \coloneqq \Pbb(X_1 = i_1,\hdots,X_8 = i_8) = f_{8 \vert 7}(i_8 \vert i_7) \cdots f_{2 \vert 1}(i_2 \vert i_{1}) f_1(i_1)
            \end{equation*}
            with $f_1(i_1) = 1/5$ for all $i_1 \in \Xc_1$, and for $\nu = 1,\hdots,d-1$, $f_{\nu+1 \vert \nu}(i_{\nu+1} \vert i_{\nu}) = P^\nu_{i_\nu,i_{\nu+1}}$ the $(i_\nu,i_{\nu+1})$ component of a randomly chosen rank-2 transition matrix $P^\nu$. We then have $\rank(f_{\nu+1 \vert \nu}) = m = 2$ for $\nu = 1,\hdots,d-1$.

            We first compute a representation of $f$ in tree-based format with the tree $T$ depicted in Figure \ref{fig:linearTreeMarkovChain} (using Algorithm 3 in \cite{Grelier2018} at precision $10^{-13}$), the obtained $\alpha$-ranks are shown in Figure \ref{fig:linearTreeMarkovChainRanks}. We then compute a representation of $f$ in tree-based format with the tree $\widetilde T$ depicted in Figure \ref{fig:nonOptimalLinearTreeMarkovChain}, to obtain the $\alpha$-ranks shown in Figure \ref{fig:nonOptimalLinearTreeMarkovChainRanks}. We see that $\max_{\alpha \in T} \rank_\alpha(f) = 4 = m^2$ whereas $\max_{\alpha \in \widetilde T} \rank_\alpha(f) = 128 = 2^7 = m^{d-1}$. As a consequence, the storage complexity of the representation is equal to $240$ with $T$, and to $35088$ with $\widetilde T$, more than $146$ times larger.

            \begin{figure}
                \centering
                \begin{subfigure}[b]{.49\textwidth}
                    \centering
                    \begin{tikzpicture}[level distance = 7mm]
                        \tikzstyle{level 1}=[sibling distance=14mm]
                        \node [node, label=above:{$1$}]  {}
                        child {node [node, label=above left:{$2$}] {}
                            child {node [node, label=above left:{$2$}] {} 
                                child {node [node, label=above left:{$2$}] {}
                                    child {node [node, label=above left:{$2$}] {}
                                        child {node [node, label=above left:{$2$}] {}
                                            child {node [node, label=above left:{$2$}] {}
                                                child {node [node, label=below:{$2$}] {}}
                                                child {node [node, label=below:{$4$}] {}}
                                            }
                                            child {node [node, label=below:{$4$}] {}}
                                        }
                                        child {node [node, label=below:{$4$}] {}}
                                    }
                                    child {node [node, label=below:{$4$}] {}}
                                }
                                child {node [node, label=below:{$4$}] {}}
                            }
                            child {node [node, label=below:{$4$}] {}
                            }
                        }
                        child {node [node, label=below:{$2$}] {}
                        }
                        ;
                    \end{tikzpicture}
                    \caption{$\alpha$-ranks when using $T$.}
                    \label{fig:linearTreeMarkovChainRanks}
                \end{subfigure}
                \begin{subfigure}[b]{.49\textwidth}
                    \centering
                    \begin{tikzpicture}[level distance = 7mm]
                        \tikzstyle{level 1}=[sibling distance=14mm]
                        \node [node, label=above:{$1$}]  {}
                        child {node [node, label=above left:{$2$}] {}
                            child {node [node, label=above left:{$8$}] {} 
                                child {node [node, label=above left:{$32$}] {}
                                    child {node [node, label=above left:{$128$}] {}
                                        child {node [node, label=above left:{$32$}] {}
                                            child {node [node, label=above left:{$8$}] {}
                                                child {node [node, label=below:{$2$}] {}}
                                                child {node [node, label=below:{$4$}] {}}
                                            }
                                            child {node [node, label=below:{$4$}] {}}
                                        }
                                        child {node [node, label=below:{$4$}] {}}
                                    }
                                    child {node [node, label=below:{$4$}] {}}
                                }
                                child {node [node, label=below:{$4$}] {}}
                            }
                            child {node [node, label=below:{$4$}] {}
                            }
                        }
                        child {node [node, label=below:{$2$}] {}
                        }
                        ;
                    \end{tikzpicture}
                    \caption{$\alpha$-ranks when using $\widetilde T$.}
                    \label{fig:nonOptimalLinearTreeMarkovChainRanks}
                \end{subfigure}
                \caption{Obtained $\alpha$-ranks when representing the Markov process of Example \ref{ex:markovChainTreeInfluence} in tree-based format with two different linear dimension trees.}
            \end{figure}
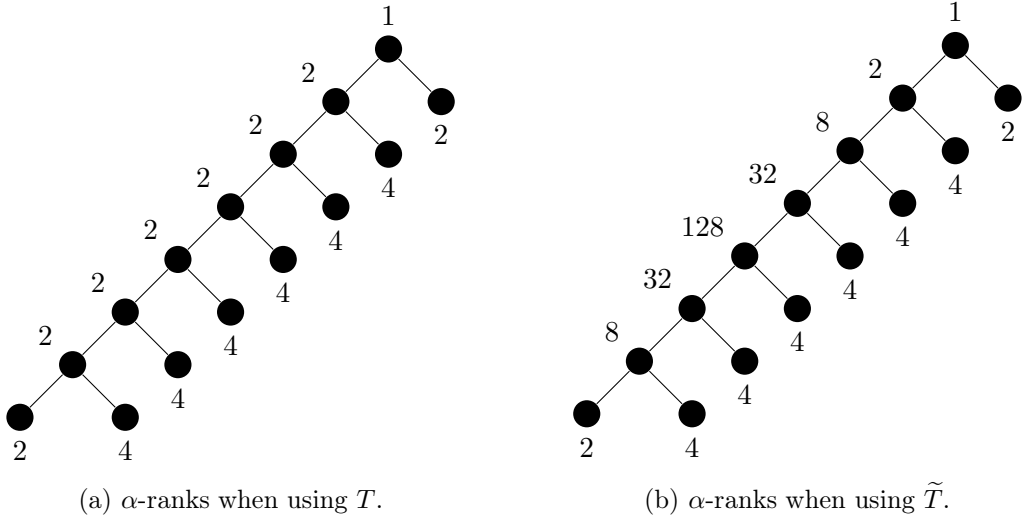
        \end{example}
 
    \subsubsection{Graphical models}
        Let us consider a graphical model with a density of the form 
        \begin{equation*}
            f(x) = \prod_{\beta \in \Cc} g_\beta(x_{\beta})
        \end{equation*}
        where $\Cc \subset 2^D$ represents the cliques of a graph $G$ with nodes $\{1\},\hdots,\{d\}$. 
        
        Consider $\alpha \subset D$. First note that if $\alpha \in \Cc$, then $\rank_\alpha(g_\alpha)=1$.
        Also, for a clique $\beta$ such that either $\beta \subset \alpha$ or $\beta \subset \alpha^c$, $\rank_\alpha(g_\beta) = 1$. Then let $\Cc_\alpha$ be the set of cliques that intersect both $\alpha$ and $\alpha^c$,
        \begin{equation*}
            \Cc_\alpha = \{\beta \in \Cc : \beta \cap \alpha \neq \emptyset, \beta \cap \alpha^c \neq \emptyset  \}.
        \end{equation*}          
        Since $\Cc \setminus \Cc_\alpha = \{\beta \in \Cc : \beta \subset \alpha^c \text{ or } \beta \subset \alpha\}$, and from Proposition \ref{prop:alpha-rank-operations}, we have 
        \begin{equation*}
            \rank_{\alpha}(f)  = \rank_\alpha( \prod_{\beta \in \Cc_\alpha} g_\beta ) \le   \prod_{\beta \in \Cc_\alpha} \rank_\alpha(g_\beta). 
        \end{equation*}
        Assuming that the $\alpha$-ranks of all functions $g_\beta$ are bounded by $m$, we have 
        \begin{equation*}
            \rank_{\alpha}(f) \le m^{\#\Cc_\alpha} \coloneqq R_\alpha.
        \end{equation*}
        For the representation of $f$ in tree-based tensor format, a tree $T$ could be chosen such that it minimizes the complexity $C(T,R)$, with $R = (R_\alpha)_{\alpha \in T}$ the above upper bound of the $T$-rank of $f$.
        
        \begin{example}\label{ex:graphicalModel}
            An example of graphical model is provided in Figure \ref{fig:graphical-model-d10}. The dimension is $d=10$.
            Here we consider discrete random variables $X_\nu$ taking $N = 5$ possible instances, so that $f(x) = \Pbb(X=x)$ is identified with a tensor of size $N^d = 5^{10} = 9765625$. It has the form 
            \begin{equation*}
                f(x) = g_{1,2,3,7}(x_1,x_2,x_3,x_7)g_{3,4,5,6}(x_3,x_4,x_5,x_6)g_{4,8}(x_4,x_8)g_{8,9,10}(x_8,x_9,x_{10}).
            \end{equation*}
            We first consider the random binary tree of Figure \ref{fig:graphical-model-d10-initial-tree} and compute a representation of the graphical model (using Algorithm 3 in \cite{Grelier2018} at precision $10^{-13}$) in the corresponding tree-based format. We observe a storage complexity of $10595875$, higher than the storage complexity of the full tensor. After the tree optimization (with Algorithm 8 provided in \cite{Grelier2018}), we obtain the tree in Figure \ref{fig:graphical-model-d10-optimized-tree} with a storage complexity of $3275$.
            
            \begin{figure}
            \centering
                \begin{tikzpicture}[node distance=2cm]
                    \node[circle,draw] (7) {$7$};
                    \node[circle,draw] (1)[above right of=7] {$1$};
                    \node[circle,draw] (2)[below right of=7] {$2$};
                    \node[circle,draw] (3)[right of=7] {$3$};
            
                    \node[circle,draw] (5)[above right of=3] {$5$};
                    \node[circle,draw] (6)[below right of=3] {$6$};
                    \node[circle,draw] (4)[right of=3] {$4$};
                    
                    \node[circle,draw] (8)[right of=4] {$8$};
                    \node[circle,draw] (9)[below right of=8] {$9$};
                    \node[circle,draw] (10)[above right of=8] {$10$};
                    \path
                    (7) edge[bend left] (1)
                    (7) edge[bend right] (2)
                    (7) edge (3)
                    (1) edge[bend right] (2)
                    (1) edge[bend left] (3)
                    (2) edge[bend right] (3)
            
                    (3) edge (4)
                    (3) edge[bend left] (5)
                    (3) edge[bend right] (6)
                    (4) edge[bend right] (5)
                    (4) edge[bend left] (6)
                    (5) edge[bend right] (6)
            
                    (4) edge (8)
                    (8) edge[bend right] (9)
                    (8) edge[bend left] (10)
                    (9) edge[bend right] (10)
                    ;
                \end{tikzpicture}
                \caption{Example of graphical model.}
                \label{fig:graphical-model-d10}
            \end{figure}
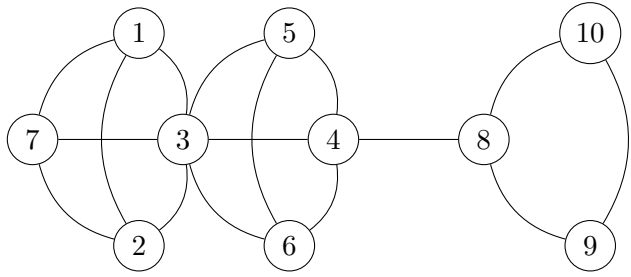
            
            \begin{figure}
            \centering
                \begin{subfigure}[t]{.49\textwidth}
                \centering
                    \begin{tikzpicture}[scale=0.4, level distance=15mm] 
                    \tikzstyle{level 1}=[sibling distance=100mm]  \tikzstyle{level 2}=[sibling distance=50mm]  \tikzstyle{level 3}=[sibling distance=25mm]  \tikzstyle{level 4}=[sibling distance=20mm]  
                        \node [node,label=above:{}]  {} child{node [node,label=above:{}] {}child{node [node,label=above:{}] {}child{node [node,label=below:{$\{4\}$}] {}}child{node [node,label=above:{}] {}child{node [node,label=below:{$\{8\}$}] {}}child{node [node,label=below:{$\{2\}$}] {}}}}child{node [node,label=above:{}] {}child{node [node,label=below:{$\{10\}$}] {}}child{node [node,label=above:{}] {}child{node [node,label=below:{$\{3\}$}] {}}child{node [node,label=below:{$\{6\}$}] {}}}}}child{node [node,label=above:{}] {}child{node [node,label=above:{}] {}child{node [node,label=below:{$\{9\}$}] {}}child{node [node,label=above:{}] {}child{node [node,label=below:{$\{7\}$}] {}}child{node [node,label=below:{$\{1\}$}] {}}}}child{node [node,label=below:{$\{5\}$}] {}}};
                    \end{tikzpicture}
                \caption{Dimension tree.}
                \end{subfigure}
                \begin{subfigure}[t]{.49\textwidth}
                \centering
                    \begin{tikzpicture}[scale=0.4, level distance=15mm]   \tikzstyle{level 1}=[sibling distance=100mm]  \tikzstyle{level 2}=[sibling distance=50mm]  \tikzstyle{level 3}=[sibling distance=25mm]  \tikzstyle{level 4}=[sibling distance=20mm]  
                        \node [node,label=above:{1}]  {} child{node [node,label=above:{625}] {}child{node [node,label=above:{125}] {}child{node [node,label=below:{5}] {}}child{node [node,label=above:{25}] {}child{node [node,label=below:{5}] {}}child{node [node,label=below:{5}] {}}}}child{node [node,label=above:{125}] {}child{node [node,label=below:{5}] {}}child{node [node,label=above:{25}] {}child{node [node,label=below:{5}] {}}child{node [node,label=below:{5}] {}}}}}child{node [node,label=above:{625}] {}child{node [node,label=above:{125}] {}child{node [node,label=below:{5}] {}}child{node [node,label=above:{25}] {}child{node [node,label=below:{5}] {}}child{node [node,label=below:{5}] {}}}}child{node [node,label=below:{5}] {}}};
                    \end{tikzpicture}
                \caption{Representation ranks.}
                \end{subfigure}
            \caption{Representation in tree-based format with an initial random tree.  The storage complexity is $10595875$.}
            \label{fig:graphical-model-d10-initial-tree} 
            \end{figure}
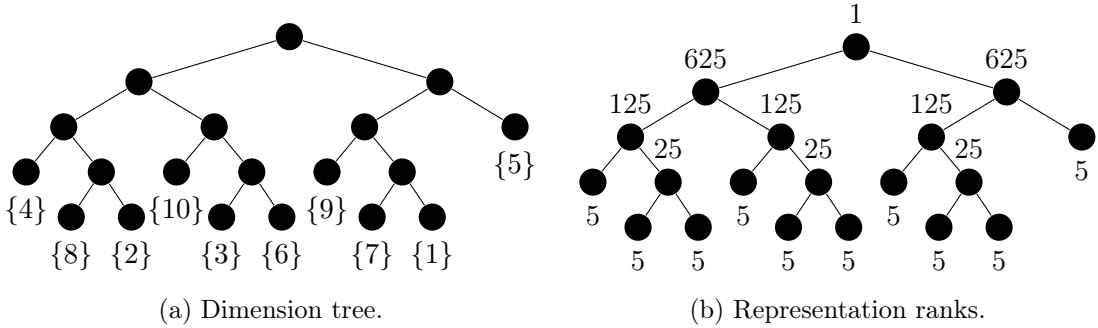
            \begin{figure}
            \centering
                \begin{subfigure}[t]{.49\textwidth}
                \centering
                    \begin{tikzpicture}[scale=0.4]   \tikzstyle{level 1}=[sibling distance=100mm]  \tikzstyle{level 2}=[sibling distance=30mm]  \tikzstyle{level 3}=[sibling distance=30mm]  \tikzstyle{level 4}=[sibling distance=30mm]  \tikzstyle{level 5}=[sibling distance=30mm]  \tikzstyle{level 6}=[sibling distance=30mm]  \tikzstyle{level 7}=[sibling distance=30mm]  
                        \node [node,label=above:{}]  {} child{node [node,label=above:{}] {}child{node [node,label=above:{}] {}child{node [node,label=below:{$\{10\}$}] {}}child{node [node,label=below:{$\{9\}$}] {}}}child{node [node,label=below:{$\{8\}$}] {}}}child{node [node,label=above:{}] {}child{node [node,label=above:{}] {}child{node [node,label=above:{}] {}child{node [node,label=above:{}] {}child{node [node,label=above:{}] {}child{node [node,label=above:{}] {}child{node [node,label=below:{$\{2\}$}] {}}child{node [node,label=below:{$\{1\}$}] {}}}child{node [node,label=below:{$\{7\}$}] {}}}child{node [node,label=below:{$\{3\}$}] {}}}child{node [node,label=below:{$\{6\}$}] {}}}child{node [node,label=below:{$\{5\}$}] {}}}child{node [node,label=below:{$\{4\}$}] {}}};
                    \end{tikzpicture}
                \caption{Dimension tree.}
                \end{subfigure}
                \begin{subfigure}[t]{.49\textwidth}
                \centering
                    \begin{tikzpicture}[scale=0.4]  \tikzstyle{level 1}=[sibling distance=100mm]  \tikzstyle{level 2}=[sibling distance=30mm]  \tikzstyle{level 3}=[sibling distance=30mm]  \tikzstyle{level 4}=[sibling distance=30mm]  \tikzstyle{level 5}=[sibling distance=30mm]  \tikzstyle{level 6}=[sibling distance=30mm]  \tikzstyle{level 7}=[sibling distance=30mm]
                        \node [node,label=above:{1}]  {} child{node [node,label=above:{5}] {}child{node [node,label=above:{5}] {}child{node [node,label=below:{5}] {}}child{node [node,label=below:{5}] {}}}child{node [node,label=below:{5}] {}}}child{node [node,label=above:{5}] {}child{node [node,label=above:{5}] {}child{node [node,label=above:{25}] {}child{node [node,label=above:{5}] {}child{node [node,label=above:{5}] {}child{node [node,label=above:{25}] {}child{node [node,label=below:{5}] {}}child{node [node,label=below:{5}] {}}}child{node [node,label=below:{5}] {}}}child{node [node,label=below:{5}] {}}}child{node [node,label=below:{5}] {}}}child{node [node,label=below:{5}] {}}}child{node [node,label=below:{5}] {}}};
                    \end{tikzpicture}
                \caption{Representation ranks.}
                \end{subfigure}
            \caption{Representation in tree-based format after tree optimization. The storage complexity is $3275$.}
            \label{fig:graphical-model-d10-optimized-tree}
            \end{figure}
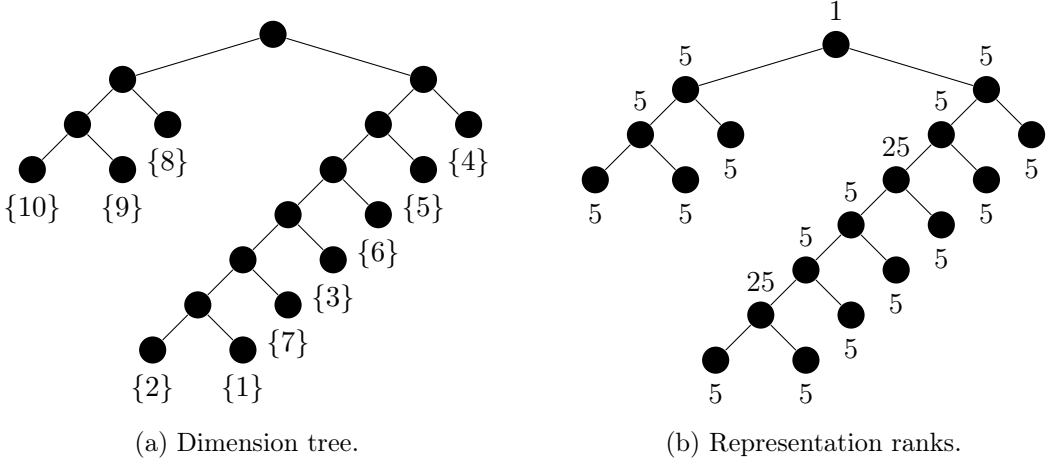
        \end{example}

\section{Learning with tree-based tensor format}
\label{sec:learningWithTBT}
    In this section, we describe a learning algorithm for estimating a probability distribution of a random vector $X$ from independent samples of this distribution.
    We assume that the distribution of $X$ has a density $f$ with respect to a measure $\mu$ over $\Xc$ (possibly discrete). 
   
    We introduce a contrast (or loss) function $\gamma : L^0_\mu(\Xc) \times \Xc \to \Rbb $ and the associated risk functional $\Rc : L^0_\mu(\Xc) \to \Rbb$ defined by
    \begin{equation}\label{eq:risk}
        \Rc(g) = \Ebb(\gamma(g,X)),
    \end{equation}
    such that the minimizer of $\Rc$ over the set of $\mu$-measurable functions is the density $f$.
    
    Given independent samples $\{x_i\}_{i=1}^n := S$ of $X$ (training sample), an approximation $g^M_n$ of the density is then obtained by minimizing the empirical risk
    \begin{equation*}
    	\Rc_n(g) = \frac{1}{n} \sum_{i=1}^n {\gamma(g,x_i)}
    \end{equation*}
    over a certain model class $M$, here a class of functions in tree-based tensor format. For a theoretical analysis of the performance of these model classes in this setting, the reader is referred to \cite{2020arXiv200701165M}.

    Choosing the contrast function as 
    \begin{equation}\label{eq:L2ContrastFunction}
        \gamma(g,x) = \Vert g \Vert_{L^2_\mu}^2 - 2g(x),
    \end{equation}
    with $\Vert \cdot \Vert_{L^2_\mu}$ the natural norm in $L^2_\mu$, leads to
    \begin{equation*}
        \Rc(g) = \Rc(f) + \Vert f-g \Vert^2_{L^2_\mu},
    \end{equation*}
    so that the minimization of $\Rc(g)$ is equivalent to the minimization of the distance (in $L^2_\mu$ norm) between $g$ and the density $f$.
    For the minimization of the empirical risk over the class of functions in tree-based formats, with an adaptation of the ranks and possibly of the dimension tree, we will adapt the algorithms proposed in \cite{Grelier2018}.

    \begin{remark}[Maximum likelihood estimation]
        Choosing the contrast function equal to $\gamma(g,x) = -\log(g(x))$ leads to 
        \begin{align*}
            \Rc(g) = - \int_{\Xc} \log(g(x))f(x)\mathrm d\mu(x)
            = \Rc(f) + D_{\text{KL}}(f \Vert g)
        \end{align*}
        with $D_{\text{KL}}(f \Vert g)$ the Kullback-Leibler divergence between $f$ and $g$ (with reference measure $\mu)$. The empirical risk minimization then corresponds to the maximum likelihood estimation.
    \end{remark}

    \begin{remark}
        We recall that the present framework applies to the case of a discrete random variable $X$ taking values in a set $\Xc = \Xc_1\times \cdots \times \Xc_d$, with measure $\rho = \sum_{x \in \Xc} \Pbb(X = x) \delta_x $, by letting $f(x) \coloneqq \Pbb(X = x)$ and $\mu \coloneqq  \sum_{x \in \Xc} \delta_x$.
        A function $g \in L^2_\mu(\Xc)$ is identified with an algebraic tensor $\mathbf{g} \in \Rbb^{\Xc}$ with a norm
        \begin{equation*}
            \Vert g \Vert_{L^2_\mu}^2 = \sum_{x\in \Xc} g(x)^2 = \Vert \mathbf{g} \Vert^2_{\ell^2(\Xc)},  
        \end{equation*}
        where $\Vert \cdot \Vert_{\ell^2(\Xc)}$ is the canonical (Frobenius) norm on $\Rbb^{\Xc}$.
    \end{remark}
        
    \subsection{Empirical risk minimization with a fixed tree-based format}
    \label{sub:riskMinimization}
   We here consider the problem of density estimation in a tensor format $\Tc_r^T(\Hc)$ with fixed dimension tree $T$ and tree-based rank $r$.

 As seen in Section \ref{sec:representation}, a function $g \in \Tc^T_r(\Hc)$ admits a parametrized representation in terms of tensors 
 $C^\alpha \in \Rbb^{K^\alpha}$, $K^\alpha = I^\alpha \times \{1,\hdots,r_\alpha\}$, $\alpha \in T$. Therefore, for a given $x \in \Xc$, $g(x)$ admits the following parametrized representation
\begin{equation*}
g(x) = \Psi(x)((C^\alpha)_{\alpha \in T}),
\end{equation*}
where $\Psi(x) : \bigtimes_{\alpha \in T}  \Rbb^{K^\alpha} \to \Rbb$ is a multilinear map which associates to $(C^\alpha)_{\alpha \in T}$ the value $g(x) =  \langle \boldsymbol{ \Phi}(x), \mathbf{g}  \rangle$, with a tensor $\mathbf{g}$ given by \eqref{tensor-g-explicit}.
For a given $\alpha \in T$, $g(x)$ can be written
$g(x) = \Psi^\alpha(x)(C^\alpha)$
with $\Psi^\alpha(x) : \Rbb^{K^\alpha} \to \Rbb $ a linear map depending on $(C^\beta)_{\beta \in T \setminus \{\alpha\}}$, 
which associates to a tensor $C^\alpha$ the value $\Psi(x)((C^\beta)_{\beta \in T})$. Therefore $g(x)$ can be written
\begin{align} \label{eq:representationAtNodeAlpha}
g(x) &= \Psi^\alpha(x)(C^\alpha) =  \langle\boldsymbol{\Psi}^\alpha(x), C^\alpha\rangle_{\ell^2} = \sum_{i_\alpha \in K^\alpha} \Psi^\alpha_{i_\alpha}(x) C^\alpha_{i_\alpha},
\end{align}
where  $\langle \cdot, \cdot \rangle_{\ell^2}$ is the canonical inner product on $\Rbb^{K^\alpha}$ and $\boldsymbol{\Psi}^\alpha(x) = (\Psi^\alpha_{i_\alpha}(x))_{i_\alpha \in K^\alpha}$ is a tensor of the same size as $C^\alpha$. The explicit expression of $\boldsymbol{\Psi}^\alpha(x)$ is given in \cite[section 2.3]{Grelier2018}.
The parametrization of a tensor is not unique and it is possible to obtain a representation \eqref{eq:representationAtNodeAlpha} where the set of functions $({\Psi}^\alpha_{i_\alpha})_{i_\alpha \in K^\alpha}$ forms an orthonormal system in $L^2_\mu$, so that 
\begin{equation}
\Vert g \Vert_{L^2_\mu}^2 = \Vert C_\alpha \Vert_{\ell^2}^2 := \sum_{i_\alpha \in K^\alpha} (C_{i_\alpha}^\alpha)^2.\label{norm-g-Calpha}
\end{equation}
This orthonormality property improves the numerical stability of learning algorithms. The algorithm for obtaining such a representation is given in \cite[section 2.4]{Grelier2018}.

The empirical risk minimization problem writes 
\begin{equation}\label{eq:minimizationProblem}
\min_{(C^\alpha)_{\alpha \in T}} \frac{1}{n} \sum_{i=1}^n {\gamma(\Psi(\cdot)((C^\alpha)_{\alpha \in T}),x_i)}
\end{equation}
and can be solved with an alternating minimization algorithm 
which consists in minimization alternatively on each parameter $C^\alpha$, $\alpha\in T$. For a given $\alpha \in T$, 
for fixed parameters $C^\beta$, $\beta \in T \setminus \{\alpha\}$, we optimize over $C^\alpha$ by solving
\begin{equation} \label{eq:alternatingMinimizationProblem}
\min_{C^\alpha \in \Rbb^{K^\alpha}} \frac{1}{n} \sum_{i=1}^n {\gamma(\Psi^\alpha(\cdot)(C^\alpha),x_i)}.
\end{equation}
Using the contrast function \eqref{eq:L2ContrastFunction} and \eqref{norm-g-Calpha}, this problem becomes
\begin{equation*}
\min_{C^\alpha \in \Rbb^{K^\alpha}} \Vert C^\alpha \Vert_{\ell^2}^2 - \frac 2 n \sum_{i=1}^n \langle \bm \Psi^\alpha(x_i),C^\alpha \rangle_{\ell^2}
\end{equation*}
and admits the solution
\begin{equation*}
C^\alpha = \frac 1 n \sum_{i=1}^n \bm \Psi^\alpha(x_i),
\end{equation*}
where $\Psi^\alpha$ depends on the parameters $C^\beta$, $\beta \in T \setminus \{\alpha\}$.
    \subsection{Sparsity exploitation and model selection}
    \label{sub:sparsity}
        Exploiting sparsity in the tensors $(C^\alpha)_{\alpha \in T}$ of the representation of a function $g \in \Tc^T_r(\Hc)$ may be relevant.

        For a node $\alpha \in T$, we let $\mathcal J$ be a given subset of $K^\alpha$ (a sparsity pattern). Using  the contrast function \eqref{eq:L2ContrastFunction}, the solution of the problem
        \begin{equation}\label{eq:sparseProblem}
            \min_{\substack{C^\alpha \in \Rbb^{K^\alpha} \\ C^\alpha_{i_\alpha} = 0, \; \forall i_\alpha \notin \mathcal J}} \Vert C_\alpha \Vert_{\ell^2}^2  - \frac 2 n \sum_{i=1}^n \langle \bm \Psi^\alpha(x_i),C^\alpha \rangle_{\ell^2}
        \end{equation}
        is
        \begin{equation*}
            C^{\alpha,\mathcal J}_{i_\alpha} = \frac{1}{n}\sum_{{k=1}}^n \bm\Psi^\alpha_{i_\alpha}(x_k) \mathbf{1}_{i_\alpha \in \mathcal{J}},
        \end{equation*}
        with $\mathbf{1}_{i_\alpha \in \mathcal{J}}$ equal to $1$ when $i_\alpha \in \mathcal J$, and $0$ otherwise. This means that the solution of \eqref{eq:sparseProblem} is simply the solution of the unconstrained problem \eqref{eq:alternatingMinimizationProblem} where the components with indices $i_\alpha \notin \mathcal J$ are set to zero.
        
We introduce sparsity in the tensor $C^\alpha$ using two strategies (working set or thresholding), both yielding $L$ solutions $C^{\alpha,1}, \hdots,C^{\alpha,L}$ associated with different sparsity patterns $\mathcal J_1,\hdots,\mathcal J_L$.
\begin{itemize}
	\item (Working-set strategy) We assume that there exists a natural increasing sequence $I^\alpha_1 \subset \cdots \subset I^\alpha_L = I^\alpha$, and we take $\mathcal J_l = \left\{(i_\alpha,k) : i_\alpha \in I^\alpha_l, 1 \leq k \leq r_\alpha\right\}$ for $1 \leq l \leq L$.
	\item (Thresholding strategy) Denoting by $a_1 > \cdots > a_L$ the ordered sequence of the values taken by $|C^\alpha_{i_\alpha}|$, $i_\alpha \in K^\alpha$, the sparsity patterns are defined by $\mathcal J_l = \left\{ i_\alpha \in K^\alpha : |C^\alpha_{i_\alpha}| \geq a_l \right\}$, for $1 \leq l \leq L$.
\end{itemize}

      Different sparsity patterns $\mathcal J_l$ correspond to different models and associated solutions $C^{\alpha,l}$. 
We wish to select, among the different solutions $C^{\alpha,l}$ associated with the sparsity patterns $\mathcal J_l$, the one, denoted by $C^{\alpha,\star}$, that minimizes the risk $\Rc(\Psi^\alpha(\cdot)(C^{\alpha,l}))$. The risk 
can be estimated, using a sample $V$ independent of the training sample $S$, by
\begin{equation*}
\Rc_{V}(\Psi^\alpha(\cdot)(C^{\alpha,l})) = \frac{1}{\# V} \sum_{x \in V} \gamma(\Psi^\alpha(\cdot)(C^{\alpha,l}),x).
\end{equation*}
However, one usually does not wish to use a sample $V$ for the sole purpose of model selection, that could have been otherwise used for the learning process. 
Cross-validation offers a way of estimating the risk without the need of an independent sample. A leave-one-out estimator of the risk writes
\begin{equation*}
\Rc_n^\text{loo}(\Psi^\alpha(\cdot)(C^{\alpha,l})) = \frac 1 n \sum_{i=1}^n \Rc_{\{x_i\}}(\Psi^\alpha(\cdot)(C^{\alpha,l,-i})),
\end{equation*}
where $C^{\alpha,l,-i}$ is the solution of the empirical risk minimization problem using the training set $S \setminus \{x_i\}$, that is
\begin{equation*} 
\min_{\substack{C^\alpha \in \Rbb^{K^\alpha} \\ C^\alpha_{i_\alpha} = 0, \; \forall i_\alpha \notin \mathcal J}} \Vert C_\alpha \Vert_{\ell^2}^2  - \frac {2}{n-1} \sum_{\substack{j=1\\j\neq i}}^n \langle \bm \Psi^\alpha(x_j),C^\alpha \rangle_{\ell^2}, 
\end{equation*}
It can be written
\begin{equation*}
C^{\alpha,l,-i}_{i_\alpha} = 
\frac{1}{n-1}\sum_{\substack{k=1 \\ k\neq i}}^n \bm\Psi^\alpha_{i_\alpha}(x_k) \mathbf{1}_{i_\alpha \in \mathcal{J}_l}.
\end{equation*}
The following proposition provides an explicit expression of the leave-one-out risk estimate as a function of $C^{\alpha,l}$. It is a special case for $p=1$ of the result of \cite[Prop. 2.1]{celisse2014optimal} for the leave-$p$-out estimator.
\begin{proposition}\label{prop:loo}
	The leave-one-out estimator of the risk can be expressed as
	\begin{align*}
	\Rc_n^\text{loo}(\Psi^\alpha(\cdot)(C^{\alpha,l})) &= \frac{-n^2}{(n-1)^2} \Vert C^{\alpha,l} \Vert_{\ell^2}^2 + \frac{2n-1}{n(n-1)^2} \sum_{i=1}^n \sum_{i_\alpha \in \mathcal J_l} \bm \Psi^\alpha_{i_\alpha}(x_i)^2 \\ 
	&= \frac{1}{n(n-1)} \sum_{i_\alpha \in \mathcal J_l} \left[\sum_{i = 1}^n \bm \Psi^\alpha_{i_\alpha}(x_i)^2 - \frac{n}{n-1} \sum_{i \neq j} \bm \Psi^\alpha_{i_\alpha}(x_i) \bm \Psi^\alpha_{i_\alpha}(x_j) \right].
	\end{align*}
\end{proposition}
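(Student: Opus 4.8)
The plan is to turn the statement into a direct algebraic calculation by substituting the explicit leave-one-out coefficients and exploiting the orthonormality of the functions $(\Psi^\alpha_{i_\alpha})_{i_\alpha \in K^\alpha}$. First I would use \eqref{eq:L2ContrastFunction} together with the isometry property \eqref{norm-g-Calpha}, which gives $\Vert \Psi^\alpha(\cdot)(C^{\alpha,l,-i}) \Vert_{L^2_\mu}^2 = \Vert C^{\alpha,l,-i} \Vert_{\ell^2}^2$, to write each leave-one-out contribution as
\begin{equation*}
\Rc_{\{x_i\}}(\Psi^\alpha(\cdot)(C^{\alpha,l,-i})) = \Vert C^{\alpha,l,-i} \Vert_{\ell^2}^2 - 2 \langle \bm\Psi^\alpha(x_i), C^{\alpha,l,-i} \rangle_{\ell^2} = \sum_{i_\alpha \in \mathcal J_l} \left[ (C^{\alpha,l,-i}_{i_\alpha})^2 - 2\, \bm\Psi^\alpha_{i_\alpha}(x_i)\, C^{\alpha,l,-i}_{i_\alpha} \right],
\end{equation*}
where the sum ranges only over $\mathcal J_l$ because $C^{\alpha,l,-i}_{i_\alpha}$ vanishes off $\mathcal J_l$ by the explicit formula given before the proposition.

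To keep the bookkeeping light I would fix a component $i_\alpha \in \mathcal J_l$ and abbreviate $\psi_k := \bm\Psi^\alpha_{i_\alpha}(x_k)$, $S := \sum_{k=1}^n \psi_k$ and $Q := \sum_{k=1}^n \psi_k^2$. With this notation the unconstrained solution satisfies $C^{\alpha,l}_{i_\alpha} = S/n$ on $\mathcal J_l$, while $C^{\alpha,l,-i}_{i_\alpha} = (S-\psi_i)/(n-1)$. Inserting these expressions and applying $\tfrac1n\sum_{i=1}^n$, each component contributes
\begin{equation*}
\frac1n \sum_{i=1}^n \left[ \frac{(S-\psi_i)^2}{(n-1)^2} - \frac{2\,\psi_i (S-\psi_i)}{n-1} \right].
\end{equation*}
The two inner sums collapse using $\sum_i \psi_i = S$ and $\sum_i \psi_i^2 = Q$, namely $\sum_i (S-\psi_i)^2 = (n-2)S^2 + Q$ and $\sum_i \psi_i(S-\psi_i) = S^2 - Q$, so that the per-component value reduces to $\tfrac{1}{n(n-1)^2}\left[-n S^2 + (2n-1) Q\right]$.

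Summing this over $i_\alpha \in \mathcal J_l$ then yields both stated forms. For the first identity I would recall $S = n\, C^{\alpha,l}_{i_\alpha}$ on $\mathcal J_l$ (and $C^{\alpha,l}$ vanishes elsewhere), which turns $\sum_{i_\alpha} (-n S^2)$ into $-\tfrac{n^2}{(n-1)^2}\Vert C^{\alpha,l}\Vert_{\ell^2}^2$ and $\sum_{i_\alpha}(2n-1)Q$ into the announced double sum $\sum_{i=1}^n \sum_{i_\alpha \in \mathcal J_l} \bm\Psi^\alpha_{i_\alpha}(x_i)^2$. For the second identity I would only rewrite $S^2 = Q + \sum_{i\neq j}\psi_i \psi_j$, giving $-n S^2 + (2n-1)Q = (n-1)Q - n\sum_{i\neq j}\psi_i\psi_j$, and then factor out $\tfrac{1}{n(n-1)}$ to recover the bracketed expression summed over $\mathcal J_l$.

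I do not expect any genuine obstacle beyond careful accounting; the only points needing attention are keeping the indicator $\mathbf 1_{i_\alpha \in \mathcal J_l}$ so that the final sums run over $\mathcal J_l$ rather than all of $K^\alpha$, and verifying the two collapses above from which the coefficients $-n$ and $2n-1$ emerge. Alternatively one could simply invoke the $p=1$ case of \cite[Prop. 2.1]{celisse2014optimal}, but the self-contained computation sketched here is short.
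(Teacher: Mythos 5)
Your computation is correct, and it is genuinely different from what the paper does: the paper offers no proof at all for Proposition \ref{prop:loo}, simply invoking the $p=1$ case of \cite[Prop. 2.1]{celisse2014optimal} for the leave-$p$-out estimator. Your self-contained argument checks out in every detail: the reduction of $\Rc_{\{x_i\}}$ to a sum over $\mathcal J_l$ via orthonormality and \eqref{norm-g-Calpha} is legitimate, the identities $\sum_i (S-\psi_i)^2 = (n-2)S^2 + Q$ and $\sum_i \psi_i(S-\psi_i) = S^2 - Q$ are correct, and combining them gives $\frac{1}{n(n-1)^2}\bigl[(n-2)S^2 + Q - 2(n-1)(S^2-Q)\bigr] = \frac{1}{n(n-1)^2}\bigl[-nS^2 + (2n-1)Q\bigr]$ per component, from which both displayed forms follow exactly as you describe (the first using $S = n\,C^{\alpha,l}_{i_\alpha}$ on $\mathcal J_l$ and the vanishing of $C^{\alpha,l}$ off $\mathcal J_l$, the second using $S^2 = Q + \sum_{i\neq j}\psi_i\psi_j$). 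What your route buys is transparency and self-containedness --- the reader sees directly where the coefficients $-n^2/(n-1)^2$ and $(2n-1)/(n(n-1)^2)$ come from --- at the cost of a page of bookkeeping; what the paper's citation buys is brevity and the more general leave-$p$-out statement, which could matter if one later wanted $p>1$. Either is acceptable; if you include your version, keep the remark that the sums run over $\mathcal J_l$ rather than $K^\alpha$, since that is the one place a careless reader could go wrong.
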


    \subsection{Tree-based rank adaptation}
        In Section \ref{sub:riskMinimization}, the computed approximation $g$ is in $\Tc_r^T(\Hc)$, with both the tree-based rank $r$ and dimension tree $T$ given. However, when the dimension $d$ is large or the size $n$ of the training sample is small, the choice of the tree-based rank can be crucial. We use the rank adaptation strategy of  \cite[Algorithm 5]{Grelier2018} that incrementally increases ranks associated with a subset of nodes of the dimension tree, selected with a criterion based on truncation errors. 

        At step $m$, given an approximation $g^m$ in $\Tc_{r^m}^T(\Hc)$ with rank $r^m$, the algorithm selects the ranks to increase by estimating the truncation errors
        \begin{equation}\label{eq:truncationError}
            \eta_\alpha(f,r_\alpha^m) = \min_{\rank_\alpha(g) \leq r_\alpha^m} \Rc(g) - \Rc(f), 
        \end{equation}
        and by increasing the ranks associated with  the nodes $\alpha$ with high truncation errors. 
         When using a $L^2$ risk, 
        \begin{equation*}
            \eta_\alpha(f,r_\alpha^m) = \min_{\rank_\alpha(g) \leq r_\alpha^m} \Vert f - g \Vert_{L^2_\mu}^2 = \sum_{k > r_\alpha^m} (\sigma_k^\alpha(f))^2,
        \end{equation*}
        where $\sigma_k^\alpha(f)$ denotes the $k$-th $\alpha$-singular value of $f$ (see \cite[Section 8.3]{hackbusch2012book} for more details about the higher-order singular value decomposition of a tensor). 
       The ranks associated with  the nodes $\alpha$ yielding the highest truncation errors are then increased by one.         
        In practice, we estimate the truncation errors $\eta_\alpha(f,r_\alpha^m)$ by computing the $\alpha$-singular values of an approximation $\tilde g^m$ of $f$ with $\alpha$-ranks $\tilde r_\alpha^m \ge r_\alpha^m$, 
        which are such that 
        \begin{equation*}
            \eta(\tilde g^m,r_\alpha^m)^2 = \sum_{k = r_\alpha^m+1}^{\tilde r_\alpha^m} (\sigma_k^\alpha(\tilde g^m))^2.
        \end{equation*}
		Then, choosing  a parameter $\theta \in [0,1]$,  we increase by one the $\alpha$-ranks associated with the nodes $\alpha$ in 
		$$
			T_\theta = \{\alpha \in \hat T :  \eta_\alpha(\tilde g^m,r_\alpha^m) \ge \theta \max_{\beta \in T} \eta_\beta(\tilde g^m,r_\beta^m)\}
		$$
 		where $\hat T \subset T$ is the subset of nodes for which the $\alpha$-rank can be increased by one without violating the admissibility conditions.
        The approximation $\tilde g^m$ is computed using as initial values the parameters of the tensor $g^m + c$, with $c = \Psi(x)((\widetilde C^\alpha)_{\alpha \in T}) \in \Tc^T_{\tilde r}(\Hc)$ a correction of $g^m$ obtained by solving the problem
        \begin{equation*}
            \min_{c \in \Tc^T_{\tilde r}} \Rc_n(g^m + c)
        \end{equation*}
        which is equivalent to
        \begin{equation*}
            \min_{c \in \Tc^T_{\tilde r}} \Vert c \Vert^2_{L^2_\mu} -\frac 2 n \sum_{i=1}^n c(x_i) + 2 \int_{\Xc} c(x) g^m(x)\mathrm d\mu(x).
        \end{equation*}
        As in the previous section, this problem is solved using an alternating minimization algorithm which consists in minimizing alternatively on each parameter $\widetilde C^\alpha$, $\alpha \in T$.  For a given $\alpha$, using a representation of $c(x) = \langle \bm\Psi^\alpha(x),\widetilde C^\alpha \rangle_{\ell^2}$ with orthonormal functions  $\{\Psi^\alpha_i(x)\}_{i \in K^\alpha}$ depending on the fixed parameters $\widetilde C^\beta$, $\beta \in T \setminus \{\alpha\}$, the minimization problem is 
        \begin{equation}\label{eq:rankOneCorrection}
            \min_{\widetilde C^\alpha \in \Rbb^{K^\alpha}} \Vert \widetilde C^\alpha \Vert_{\ell^2}^2 - \frac 2 n \sum_{i=1}^n \langle \bm\Psi^\alpha(x_i),\widetilde C^\alpha \rangle_{\ell^2} + 2 \int_{\Xc} \langle \bm\Psi^\alpha(x),\widetilde C^\alpha \rangle_{\ell^2} g^m(x)\mathrm d\mu(x).
        \end{equation}
        The solution of \eqref{eq:rankOneCorrection} is 
        \begin{equation}\label{eq:rankOneCorrectionStep}
            \widetilde C^\alpha = \frac 1 n \sum_{i=1}^n \bm \Psi^\alpha(x_i) - S^\alpha,
        \end{equation}
        with $S^\alpha$ such that 
        \begin{equation*}
        	\langle S^\alpha, \widetilde C^\alpha \rangle_{\ell^2} = \int_{\Xc} \langle \bm\Psi^\alpha(x), \widetilde C^\alpha \rangle_{\ell^2} g^m(x)\mathrm d\mu(x),
        \end{equation*}
        that is 
        \begin{align*}
        	\small
            S^\alpha = \sum_{\substack{i_\beta \in I^\beta \\ \beta \in \Lc(T)}} \sum_{\substack{1 \leq k_\beta \leq \tilde r_\beta^m \\ \beta \notin S(\alpha) \cup \{\alpha\}}} \sum_{\substack{1 \leq l_\beta \leq  r_\beta^m \\ \beta \in T}} & \prod_{\beta \notin \Lc(T) \cup \{\alpha\}} \widetilde C^\beta_{(k_\nu)_{\nu \in S(\beta)}, k_\beta}
            \prod_{\beta \notin \Lc(T)} C^\beta_{(l_\nu)_{\nu \in S(\beta)},l_\beta} \prod_{\beta \in \Lc(T)} \widetilde C^\beta_{i_\beta,k_\beta} C^\beta_{i_\beta,l_\beta}
        \end{align*}
        if $\alpha \in T \setminus \Lc(T)$, and
        \begin{align*}
        \small
            S^\alpha = \sum_{\substack{i_\beta \in I^\beta \\ \beta \in \Lc(T) \setminus \{\alpha\}}} \sum_{\substack{1 \leq k_\beta \leq \tilde r_\beta^m \\ \beta \in T \setminus \{\alpha\}}} \sum_{\substack{1 \leq l_\beta \leq  r_\beta^m \\ \beta \in T}} \prod_{\beta \notin \Lc(T)} \widetilde C^\beta_{(k_\nu)_{\nu \in S(\beta)}, k_\beta} C^\beta_{(l_\nu)_{\nu \in S(\beta)},l_\beta}\prod_{\beta \in \Lc(T) \setminus \{\alpha\}} \widetilde C^\beta_{i_\beta,k_\beta}
            C^\beta_{i_\beta,l_\beta}
            C^\alpha_{i_\alpha,l_\alpha}
        \end{align*}
        if $\alpha \in \Lc(T)$. In the above expressions, a summation over $\beta \notin J$ means over $\beta \in T\setminus J$.
        
        In the numerical experiments, we will only consider a rank-one correction $c$, which means $\tilde r_\alpha = 1$ for all $\alpha$ and $\widetilde C^\beta = 1$ for all $\beta \notin \Lc(T).$ 
		Then the truncation errors are estimated with $\eta(\tilde g^m,r_\alpha^m)^2 = (\sigma_{r_\alpha^m + 1}^\alpha(\tilde g^m))^2$.

    \subsection{Dimension tree adaptation}       
    \label{subsec:treeAdaptation}
        
As seen in Section \ref{sec:representationOfProbabilisticModels}, the choice of dimension tree $T$ can have a strong influence on the $T$-rank $r$ of the representation of some probabilistic models, hence on their complexity $C(T,r)$. Furthermore, the dimension tree yielding the smallest  complexity can carry information about the dependence structure of the represented probabilistic model. 

Several approaches have been proposed for adapting dimension trees (or ordering of variables for a given tree topology) using different types of information on the function (see, e.g. \cite{ballani2014tree,Bebendorf2014,barcza2011}.

Here, in order to try to recover the optimal tree in terms of storage complexity, we rely on the stochastic algorithm 8 in \cite{Grelier2018}, which generates a Markov process in the set of all possible trees. Starting from a tree $T$ and a representation of a function $f \in \Tc^T_r(\Hc)$, it explores the set of trees of given arity (the maximal number of children among all the nodes of the tree) and returns a function $g \in \Tc^{T^\star}_{r^\star}(\Hc)$ with $C(T^\star,r^\star) \leq C(T,r)$, such that $\Vert f-g \Vert_{L^2_\mu} \leq \varepsilon \Vert f \Vert_{L^2_\mu}$ for a given tolerance $\varepsilon$.
This algorithm performs a sequence of permutations of pairs of nodes using efficient linear algebra (contractions of tensors and truncated singular value decompositions of matricizations with an accuracy depending on the sought tolerance $\varepsilon$). 

In order to reduce the storage complexity of the representation of a function, one ought to perform permutations that remove the nodes associated with high ranks, and adding nodes associated with small ranks. This, as well as the complexity of the computations needed to permute two given nodes, are the criteria used by the stochastic algorithm to choose which nodes to permute. 

More precisely, at a given iteration of the algorithm, given a tree $T$ and a current approximation $f \in \Tc^T_r(\Hc)$, we draw a new tree $T^\star$ from a suitable distribution over the set of all possible trees, compute an approximation $g$ of $f$ in $\Tc^{T^\star}_{r^\star}(\Hc)$  such that  $\Vert f-g \Vert_{L^2_\mu} \leq \varepsilon \Vert f \Vert_{L^2_\mu}$ (using truncated higher order singular value decomposition), and accept 
this tree if $C(T^\star,r^\star) \leq C(T,r)$. If accepted, we set $T \leftarrow T^\star$ and $f \leftarrow g$.  
Given $T_0 := T$, a new proposal $T^\star$ is defined as  $T^\star = \sigma_m\circ  \hdots \circ \sigma_1(T)$, where $\sigma_i$ is a function that permutes two nodes of the tree $T_{i-1} := \sigma_{i-1} \circ \hdots \circ \sigma_1(T)$, and $m$ is a random number of permutations drawn according the distribution $\mathbb{P}(m= k) \propto k^{-\gamma_1},$ $k\in \mathbb{N}^*$, with $\gamma_1>0$ that gives a higher probability to low numbers of permutations. The function $\sigma_i$ is defined by drawing two nodes $\alpha_i$ and $\beta_i$ in $T_{i-1}$. The first  node $\alpha_i$ in $T_{i-1}$ is drawn according the distribution $\mathbb{P}(\alpha_i = \alpha ) \propto \mathrm{rank}(P(\alpha))^{\gamma_2}$, where $P(\alpha)$ is the parent node of $\alpha$ in $T_i$ and $\gamma_2>0$. This gives a higher probability to select  a node $\alpha$ with high parent's rank and obtain a next tree $T_i$ without this parent node. Given $\alpha_i$, the second node $\beta_i$ is drawn in $T_i$ according a distribution over $T_i$ such that $\mathbb{P}(\beta_i = \beta) = 0 $ if $\beta$ is either an ascendant  or a descendant of $\alpha_i$, and 
$\mathbb{P}(\beta_i = \beta) \propto d(\alpha_i,\beta)^{-\gamma_3}$ otherwise, with $\gamma_3>0$. $d(\alpha_i,\beta)$ measures the computational complexity for  performing the permutation of $\alpha_i$ and $\beta$ and compute an approximation of $f$ in the format associated with the next tree $T_i$
  (see \cite[section 4.2.1]{Grelier2018} for a precise definition of $d$). This choice gives a higher probability to modifications of the tree inducing a low computational complexity.

\begin{remark}[Permutation of nodes]
Let consider two nodes $\alpha$ and $\beta$ of a tree $T$ such that $\alpha \cap \beta = \emptyset$ (i.e., one node is not an ascendant of the other).
A tree $ \sigma(T)$ obtained by permuting the nodes $\alpha$ and $\beta$ of $T$ is such that 
the brothers $\alpha$ (resp. $\beta$) in $\sigma(T)$ are the brothers of $\beta$ (resp. $\alpha$) in $T$, and the descendants of $\alpha$ (resp. $\beta$) are the same in $T$ and $\sigma(T)$.
\end{remark}

\begin{remark}[Choice of parameters]
The parameters $\gamma_1$, $\gamma_2$ and $\gamma_3$ of the distributions involved in the definition of the proposal have an influence on the performance of the stochastic algorithm. In our experiments, we choose $\gamma_1=\gamma_2 = \gamma_3 = 2$. An analysis of the impact of this choice would merit further studies. 
\end{remark}

\begin{example} The tree adaptation algorithm allows the  transition from the linear dimension tree \ref{fig:treeOptim1} to the balanced tree \ref{fig:treeOptim5} (of same arity) by performing a sequence of permutations of pairs of nodes displayed in Figures \ref{fig:treeOptim2} to \ref{fig:treeOptim5}. 
	The transition between the trees \ref{fig:treeOptim1} and \ref{fig:treeOptim2} removed the node $\{1,2,3\}$ and added the node $\{3,4\}$.
\end{example}

        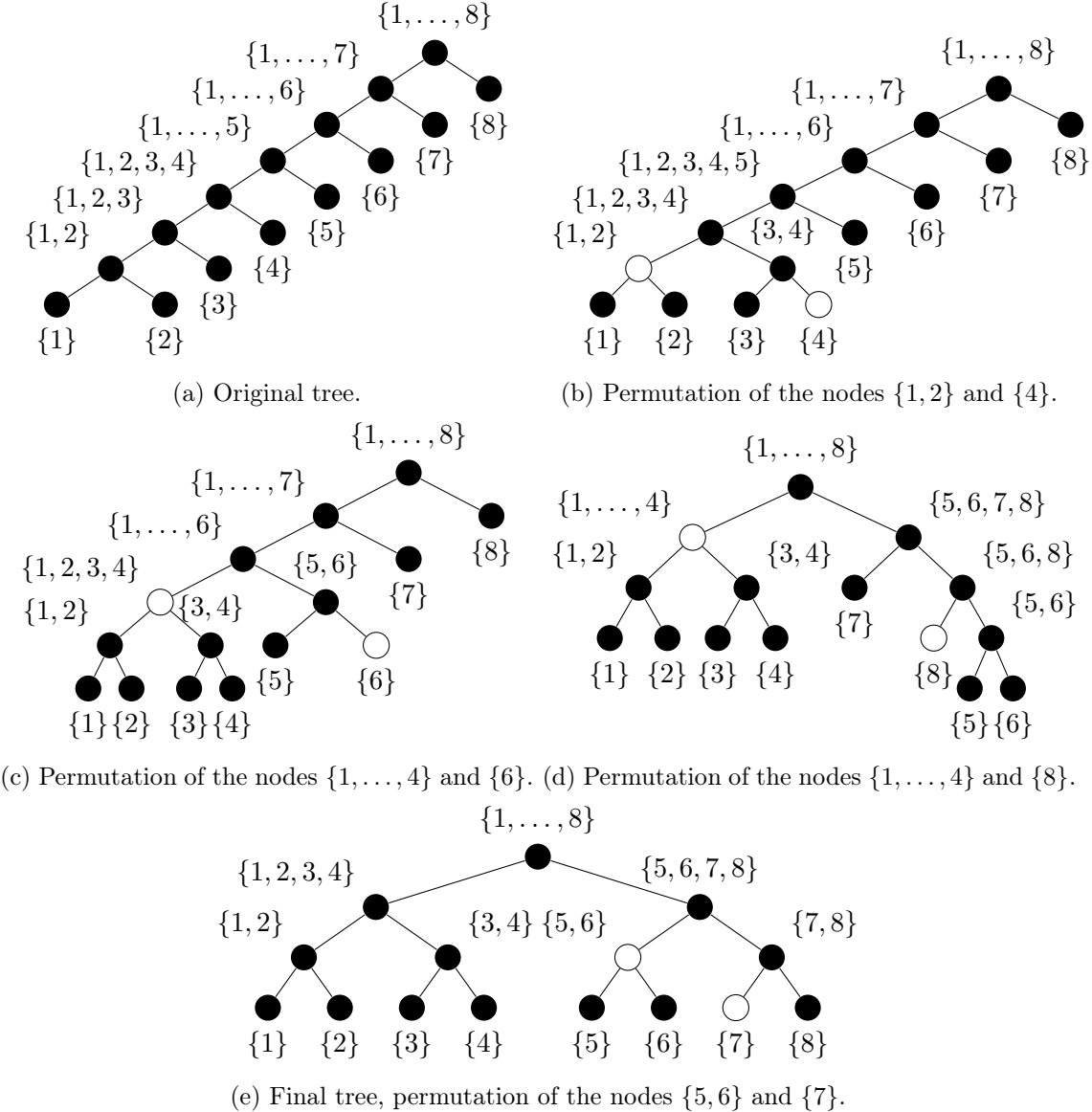
\begin{figure}[!h]
            \centering
            \begin{subfigure}[t]{.49\textwidth}
                \centering
                \begin{tikzpicture}[level distance = 5mm]
                    \tikzstyle{level 1}=[sibling distance=15mm]
                    \node [node, label=above:{$\{1,\hdots,8\}$}]  {}
                    child {node [node, label=above left:{$\{1,\hdots,7\}$}] {}
                        child {node [node, label=above left:{$\{1,\hdots,6\}$}] {} 
                            child {node [node, label=above left:{$\{1,\hdots,5\}$}] {}
                                child {node [node, label=above left:{$\{1,2,3,4\}$}] {}
                                    child {node [node, label=above left:{$\{1,2,3\}$}] {} 
                                        child {node [node, label=above left:{$\{1,2\}$}] {} 
                                            child {node [node, label=below:{$\{1\}$}] {} }
                                            child {node [node, label=below:{$\{2\}$}] {} }
                                            }
                                        child {node [node, label=below:{$\{3\}$}] {} }
                                        }
                                    child {node [node, label=below:{$\{4\}$}] {} }
                                }
                                child {node [node, label=below:{$\{5\}$}] {}}
                            }
                            child {node [node, label=below:{$\{6\}$}] {}}
                        }
                        child {node [node, label=below:{$\{7\}$}] {}
                        }
                    }
                    child {node [node, label=below:{$\{8\}$}] {}
                    }
                    ;
                \end{tikzpicture}
                \caption{Original tree.}
                \label{fig:treeOptim1}
            \end{subfigure}
            \begin{subfigure}[t]{.49\textwidth}
                \centering
                \begin{tikzpicture}[level distance = 5mm]
                    \tikzstyle{nodePerm}=[circle,fill=none, draw]
                    \tikzstyle{level 1}=[sibling distance=20mm]
                    \tikzstyle{level 6}=[sibling distance=10mm]
                    \node [node, label=above:{$\{1,\hdots,8\}$}]  {}
                    child {node [node, label=above left:{$\{1,\hdots,7\}$}] {}
                        child {node [node, label=above left:{$\{1,\hdots,6\}$}] {} 
                            child {node [node, label=above left:{$\{1,2,3,4,5\}$}] {}
                                child {node [node, label=above left:{$\{1,2,3,4\}$}] {}
                                    child {node [nodePerm, label=above left:{$\{1,2\}$}] {} 
                                        child {node [node, label=below:{$\{1\}$}] {} }
                                        child {node [node, label=below:{$\{2\}$}] {} }
                                        }
                                    child {node [node, label=above:{$\{3,4\}$}] {} 
                                        child {node [node, label=below:{$\{3\}$}] {} }
                                        child {node [nodePerm, label=below:{$\{4\}$}] {} }
                                         }
                                }
                                child {node [node, label=below:{$\{5\}$}] {}}
                            }
                            child {node [node, label=below:{$\{6\}$}] {}}
                        }
                        child {node [node, label=below:{$\{7\}$}] {}
                        }
                    }
                    child {node [node, label=below:{$\{8\}$}] {}
                    }
                    ;
                \end{tikzpicture}
                \caption{Permutation of the nodes $\{1,2\}$ and $\{4\}$.}
                \label{fig:treeOptim2}
            \end{subfigure}
            \begin{subfigure}[t]{.49\textwidth}
                \centering
                \begin{tikzpicture}[level distance = 6mm]
                    \tikzstyle{nodePerm}=[circle,fill=none, draw]
                    \tikzstyle{level 1}=[sibling distance=23mm]
                    \tikzstyle{level 2}=[sibling distance=23mm]
                    \tikzstyle{level 3}=[sibling distance=23mm]
                    \tikzstyle{level 4}=[sibling distance=14mm]
                    \tikzstyle{level 5}=[sibling distance=6mm]
                    \node [node, label=above:{$\{1,\hdots,8\}$}]  {}
                    child {node [node, label=above left:{$\{1,\hdots,7\}$}] {}
                        child {node [node, label=above left:{$\{1,\hdots,6\}$}] {} 
                            child {node [nodePerm, label=above left:{$\{1,2,3,4\}$}] {}
                                child {node [node, label=above left:{$\{1,2\}$}] {} 
                                    child {node [node, label=below:{$\{1\}$}] {} }
                                    child {node [node, label=below:{$\{2\}$}] {} }
                                    }
                                child {node [node, label=above:{$\{3,4\}$}] {} 
                                    child {node [node, label=below:{$\{3\}$}] {} }
                                    child {node [node, label=below:{$\{4\}$}] {} }
                                     }
                            }
                            child {node [node, label=above:{$\{5,6\}$}] {}
                                child {node [node, label=below:{$\{5\}$}] {}}
                                child {node [nodePerm, label=below:{$\{6\}$}] {}}
                            }
                        }
                        child {node [node, label=below:{$\{7\}$}] {}
                        }
                    }
                    child {node [node, label=below:{$\{8\}$}] {}
                    }
                    ;
                \end{tikzpicture}
                \caption{Permutation of the nodes $\{1,\hdots,4\}$ and $\{6\}$.}
                \label{fig:treeOptim3}
            \end{subfigure}
            \begin{subfigure}[t]{.49\textwidth}
                \centering
                \begin{tikzpicture}[level distance = 7mm]
                    \tikzstyle{nodePerm}=[circle,fill=none, draw]
                    \tikzstyle{level 1}=[sibling distance=30mm]
                    \tikzstyle{level 2}=[sibling distance=15mm]
                    \tikzstyle{level 3}=[sibling distance=8mm]
                    \tikzstyle{level 4}=[sibling distance=6mm]
                    \node [node, label=above:{$\{1,\hdots,8\}$}]  {}
                    child {node [nodePerm, label=above left:{$\{1,\hdots,4\}$}] {}
                        child {node [node, label=above left:{$\{1,2\}$}] {} 
                            child {node [node, label=below:{$\{1\}$}] {} }
                            child {node [node, label=below:{$\{2\}$}] {} }
                            }
                        child {node [node, label=above right:{$\{3,4\}$}] {} 
                            child {node [node, label=below:{$\{3\}$}] {} }
                            child {node [node, label=below:{$\{4\}$}] {} }
                             }
                    }
                    child {node [node, label=above right:{$\{5,6,7,8\}$}] {}
                        child {node [node, label=below:{$\{7\}$}] {}}
                        child {node [node, label=above right:{$\{5,6,8\}$}] {} 
                            child {node [nodePerm, label=below:{$\{8\}$}] {}}
                            child {node [node, label=above right:{$\{5,6\}$}] {}
                                child {node [node, label=below:{$\{5\}$}] {}}
                                child {node [node, label=below:{$\{6\}$}] {}}
                            }
                        }
                    }
                    ;
                \end{tikzpicture}
                \caption{Permutation of the nodes $\{1,\hdots,4\}$ and $\{8\}$.}
                \label{fig:treeOptim4}
            \end{subfigure}
            \begin{subfigure}[t]{\textwidth}
                \centering
                \begin{tikzpicture}[level distance = 7mm]
                    \tikzstyle{nodePerm}=[circle,fill=none, draw]
                    \tikzstyle{level 1}=[sibling distance=45mm]
                    \tikzstyle{level 2}=[sibling distance=20mm]
                    \tikzstyle{level 3}=[sibling distance=10mm]
                    \node [node, label=above:{$\{1,\hdots,8\}$}]  {}
                    child {node [node, label=above left:{$\{1,2,3,4\}$}] {}
                        child {node [node, label=above left:{$\{1,2\}$}] {} 
                            child {node [node, label=below:{$\{1\}$}] {}}
                            child {node [node, label=below:{$\{2\}$}] {}}
                        }
                        child {node [node, label=above right:{$\{3,4\}$}] {}
                            child {node [node, label=below:{$\{3\}$}] {}}
                            child {node [node, label=below:{$\{4\}$}] {}}
                        }
                    }
                            child {node [node, label=above:{$\{5,6,7,8\}$}] {}
                                child {node [nodePerm, label=above left:{$\{5,6\}$}] {} 
                                    child {node [node, label=below:{$\{5\}$}] {} }
                                    child {node [node, label=below:{$\{6\}$}] {} }
                                    }
                                child {node [node, label=above right:{$\{7,8\}$}] {} 
                                    child {node [nodePerm, label=below:{$\{7\}$}] {} }
                                    child {node [node, label=below:{$\{8\}$}] {} }
                                     }
                            }
                    ;
                \end{tikzpicture}
                \caption{Final tree, permutation of the nodes $\{5,6\}$ and $\{7\}$.}
                \label{fig:treeOptim5}
            \end{subfigure}
            \caption{Example of a sequence of nodes permutations for a transition from a linear tree to a balanced tree in dimension 8.}
        \end{figure}

\section{Numerical examples}
\label{sec:numericalExamples}
   
   In this section, we illustrate the performance of the proposed algorithm for learning probability distributions. For a validation purpose, we here consider synthetic data generated from given distributions for which standard sampling strategies are available.
The results were obtained with the \textsc{Matlab} toolbox ApproximationToolbox \cite{nouy_anthony_2020_3653971} and can be reproduced by running the scripts provided with it. The algorithms are also available in the python package \emph{tensap} \cite{anthony_nouy_2020_3903331}.

    \paragraph{Contrast function and reference measure.}
	In all examples, we consider the $L^2$ contrast function $\gamma(g,x) = \Vert g \Vert_{L^2_\mu}^2 - 2g(x)$. 
The reference measure $\mu$ is always a product measure and $\Xc = \Xc_1 \times \cdots \times \Xc_d$.  
For examples involving discrete random variables, we consider $\mu = \sum_{x \in \Xc} \delta_x$, so that  $L^2_{\mu}(\Xc)$ is identified with $\ell^2(\Xc)$. For continuous random variables, $\mu$ is taken as the Lebesgue measure or a uniform probability measure on $\Xc$ when $\Xc$ is a compact set.

	\paragraph{Approximation spaces.}
In the case of discrete random variables with a finite set $\Xc$, we let $\Hc= L^2_\mu(\Xc)$ so that there is no  discretization error, and we use a canonical basis  (see Remark \ref{rmk:spaces-discrete}). In the case of continuous random variables, for each dimension $\nu = 1,\hdots,d$, we introduce a finite dimensional space $\Hc_\nu$ in $L^2_{\mu_\nu}(\Xc_\nu)$ and use orthonormal bases of $\Hc_\nu$ (\textit{e.g.} polynomials, wavelets).
We exploit sparsity in the leaf tensors $(C^\alpha)_{\alpha \in \Lc(T)}$ by using 
the working-set strategy presented in Section \ref{sub:riskMinimization}. For polynomial bases, we use the natural sequence of candidate patterns associated with spaces of polynomials with increasing degree. For wavelets bases, we use a sequence of candidate patterns associated with wavelets spaces with increasing resolution.

\paragraph*{Tensor formats.}
We only consider tensor formats associated with binary dimension partition trees. 
When using the algorithm with tree-based rank and tree adaptation, the starting dimension tree is always a linear tree (such as in Figure \ref{fig:linearTreeMarkovChain}), where the dimensions $\nu = 1,\hdots,d$ are randomly assigned to the leaf nodes.

\paragraph*{Error measures.}
The quality of the obtained approximation $g$ is assessed by estimating the risk by
\begin{equation*}
\Rc_{S_\text{test}}(g) = \Vert g\Vert^2_{L^2_\mu} - \frac{2}{\#S_\text{test}} \sum_{x \in S_\text{test}} g(x),
\end{equation*}
with $S_\text{test}$ a sample of $X$, independent of $S$, as well as, when $f$ can be evaluated, by computing the relative error
\begin{equation*}
\varepsilon(g) = \left(\frac{\sum_{x \in S_\varepsilon} (f(x) - g(x))^2}{\sum_{x \in S_\varepsilon} f(x)^2}\right)^{1/2},
\end{equation*}
with $S_\varepsilon$ a sample from $\mu$ if $\mu$ is a probability measure, or from $\frac{1}{\mu(\Xc)}\mu$ when $\mu$ is a finite measure (\textit{e.g.} when $\mu$ is the Lebesgue measure over a compact set $\Xc$). In the case of discrete random variables, a function $f$ in $\Rbb^\Xc$ is identified with a multi-dimensional array, and $S_\epsilon$ corresponds to a sample of the entries of the array.

    \subsection{Truncated multivariate normal distribution}
    We first consider the estimation of the density of a random vector $X = (X_1,\hdots,X_6)$ following a truncated normal distribution with zero mean and covariance matrix $\Sigma$. Its support is $\Xc = \times_{\nu=1}^6 [-5\sigma_\nu,5\sigma_\nu]$, with $\sigma_\nu^2 = \Sigma_{\nu\nu}$. The reference measure $\mu$ is the Lebesgue measure on $\Xc$ and the density to approximate is such that
\begin{equation}\label{eq:normalRandomVariablePdf}
f(x) \propto \exp\left( -\frac 1 2 x^T\Sigma^{-1}x \right) \mathbf{1}_{x\in \Xc}.
\end{equation}
In each dimension $\nu$, we use polynomials of maximal degree 50, orthonormal in $L^2(-5\sigma_\nu,5\sigma_\nu)$.

\subsubsection{Groups of independent random variables}
We consider the following covariance matrix
\begin{equation}
\label{eq:covarianceMatrixGroupIndependent}
\Sigma = \begin{pmatrix}
2   & 0   & 0.5 & 1 & 0   &0.5 \\
0   & 1   & 0   & 0 & 0.5 &0   \\
0.5 & 0   & 2   & 0 & 0   &1   \\
1   & 0   & 0   & 3 & 0   &0   \\
0   & 0.5 & 0   & 0 & 1   &0   \\
0.5 & 0   & 1   & 0 & 0   &2
\end{pmatrix}.
\end{equation} 
Up to a permutation $(3,6,1,4,2,5)$ of its rows and columns, it can be written
\begin{equation*}
\begin{pmatrix}
2   & 1   & 0.5 & 0   & 0   & 0   \\
1   & 2   & 0.5 & 0   & 0   & 0   \\
0.5 & 0.5 & 2   & 1   & 0   & 0   \\
0   & 0   & 1   & 3   & 0   & 0   \\
0   & 0   & 0   & 0   & 1   & 0.5 \\
0   & 0   & 0   & 0   & 0.5 & 1
\end{pmatrix}
\end{equation*}
so that one can see that the random variables $(X_1,X_3,X_4,X_6)$ and $(X_2,X_5)$ are independent, as well as $X_4$ and $(X_3,X_6)$. Therefore, the density has the form 
$$
f(x) = f_{1,3,4,6}(x_1,x_3,x_4,x_6) f_{2,5}(x_2,x_5) =   f_{4\vert 1}(x_4\vert x_1) f_{1,3,6}(x_1,x_3,x_6) f_{2,5}(x_2,x_5).
$$
One can then expect that, when approximating the density of $X$ in tree-based format, 
a suitable dimension tree $T$ would contain the nodes $\{2,5\}$  and  $\{1,3,4,6\}$, since $\rank_{\{2,5\}}(f)  = \rank_{\{1,3,4,6\}}(f)=1$. If we further assume that $f_{1,3,6}$ has low ranks, 
it would contain 
the nodes $\{3,6\}$ and $\{1,4\}$, since $\rank_{\{3,6\}}(f) = \rank_{\{3,6\}}(f_{1,3,6}) $ and 
$\rank_{\{1,4\}}( f) = \rank_{\{1,4\}}( f_{1,3,6})$
(see a possible tree in Figure \ref{fig:normalRandomVariableExpectedTree}).

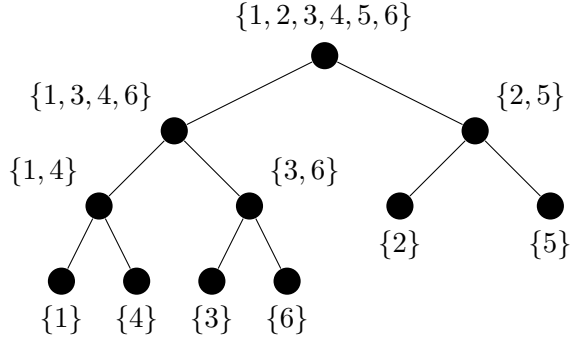
\begin{figure}[ht]
	\centering
	\begin{tikzpicture}[level distance = 10mm]
	\tikzstyle{level 1}=[sibling distance=40mm]
	\tikzstyle{level 2}=[sibling distance=20mm]
	\tikzstyle{level 3}=[sibling distance=10mm]
	
	\node [node, label=above:{$\{1,2,3,4,5,6\}$}]  {}
	child {node [node, label=above left:{$\{1,3,4,6\}$}] {}
		child {node [node, label=above left:{$\{1,4\}$}] {}
			child {node [node, label=below:{$\{1\}$}] {}}
			child {node [node, label=below:{$\{4\}$}] {}}
		}
		child {node [node, label=above right:{$\{3,6\}$}] {}
			child {node [node, label=below:{$\{3\}$}] {}}
			child {node [node, label=below:{$\{6\}$}] {}} 
		}
	}
	child {node [node, label=above right:{$\{2,5\}$}] {}
		child {node [node, label=below:{$\{2\}$}] {}}
		child {node [node, label=below:{$\{5\}$}] {}}
	}
	;
	\end{tikzpicture}
	\caption{Example of expected tree $T$ for the approximation of density \eqref{eq:normalRandomVariablePdf} with covariance matrix \eqref{eq:covarianceMatrixGroupIndependent} in tree-based tensor format with tree adaptation.}
	\label{fig:normalRandomVariableExpectedTree}
\end{figure}

Table \ref{tab:normalRandomVariableResultsMC} shows the results obtained with the learning algorithm  with different sizes of training set.
We notice that, as expected, $\Rc_{S_\text{test}}(g)$ and $\varepsilon(g)$ decrease with $n$. Figure \ref{fig:convergenceGroupIndependent} shows that, on average, the convergence of $\varepsilon(g)$ is approximately in  $n^{-1/2}$. This convergence  is better than the one expected (and observed in practice) when using a second-order kernel estimator, in $n^{-1/5}$.
On Figure \ref{fig:case1GroupIndependenceTrees}, we {show} the obtained tree (associated with the smallest error over 10 trials) for different training sample sizes $n$. For $n \ge 10^4$, the algorithm yields a tree with the expected nodes (see discussion above).  

\begin{table}[h]
	\centering
	\caption{Obtained results for the learning of density \eqref{eq:normalRandomVariablePdf} with covariance matrix \eqref{eq:covarianceMatrixGroupIndependent}, with different training sample sizes $n$. Expectation and ranges (in brackets) over 10 trials of $\Rc_{S_\text{test}}(g)$, $\varepsilon(g)$ and $C(T,r)$, and tree $T$ associated with the smallest error over 10 trials.}
	\label{tab:normalRandomVariableResultsMC}
	\begin{tabular}{ccccc}\toprule
		$n$ & $\Rc_{S_\text{test}}(g) \times 10^{-2}$ & $\varepsilon(g)$ & $T$ & $C(T,r)$ \\\midrule
		$10^2$ & $22.76$ $[-5.50,119]$& $1.71$ $[0.53 , 4.06]$ & Fig. \ref{fig:case1GroupIndependenceTree1} & $311.0$ $[311,311]$ \\
		$10^3$ & $-6.35$ $[-7.29,-5.93]$ & $0.41$ $[0.22,0.47]$ & Fig. \ref{fig:case1GroupIndependenceTree2} & $443.9$ $[311,637]$ \\
		$10^4$ & $-7.23$  $[-7.60,-6.85]$ & $0.23$ $[0.11,0.33]$ & Fig. \ref{fig:case1GroupIndependenceTree3} & $622.9$ $[521,911]$ \\
		$10^5$ & $-7.67$ $[-7.68,-7.66]$ & $0.05$  $[0.04,0.07]$ & Fig. \ref{fig:case1GroupIndependenceTree3} & $1081.3$ $[911,1213]$ \\
		$10^6$ & $-7.69$ $[-7.70,-7.69]$ & $0.01$ $[0.008,0.012]$ & Fig. \ref{fig:case1GroupIndependenceTree3} & $1357.7$ $[1283,1546]$ \\
		\bottomrule  
	\end{tabular}
\end{table}

\begin{figure}
	\centering
	\begin{subfigure}[t]{.49\textwidth}
		\centering
		\begin{tikzpicture}[level distance = 5mm]
		\tikzstyle{level 1}=[sibling distance=15mm]
		\node [node, label=above:{$\{1,2,3,4,5,6\}$}]  {}
		child {node [node, label=above left:{$\{2,3,4,5,6\}$}] {}
			child {node [node, label=above left:{$\{2,4,5,6\}$}] {} 
				child {node [node, label=above left:{$\{2,4,5\}$}] {}
					child {node [node, label=above left:{$\{4,5\}$}] {}
						child {node [node, label=below:{$\{4\}$}] {} }
						child {node [node, label=below:{$\{5\}$}] {} }
					}
					child {node [node, label=below:{$\{2\}$}] {}}
				}
				child {node [node, label=below:{$\{6\}$}] {}}
			}
			child {node [node, label=below:{$\{3\}$}] {}
			}
		}
		child {node [node, label=below:{$\{1\}$}] {}
		}
		;
		\end{tikzpicture}
		\caption{$n = 10^2$.}
		\label{fig:case1GroupIndependenceTree1}
	\end{subfigure}
	\begin{subfigure}[t]{.49\textwidth}
		\centering
		\begin{tikzpicture}[level distance = 6mm]
		\tikzstyle{level 1}=[sibling distance=25mm]
		\tikzstyle{level 2}=[sibling distance=12mm]
		\tikzstyle{level 3}=[sibling distance=12mm]
		\tikzstyle{level 4}=[sibling distance=12mm]
		
		\node [node, label=above:{$\{1,2,3,4,5,6\}$}]  {}
		child {node [node, label=above left:{$\{1,3,4,6\}$}] {}
			child {node [node, label=above left:{$\{3,4,6\}$}] {}
				child {node [node, label=above left:{$\{3,6\}$}] {}
					child {node [node, label=below:{$\{6\}$}] {}}
					child {node [node, label=below:{$\{3\}$}] {}} 
				}
				child {node [node, label=below:{$\{4\}$}] {}}
			}
			child {node [node, label=below:{$\{1\}$}] {}}
		}
		child {node [node, label=above right:{$\{2,5\}$}] {}
			child {node [node, label=below:{$\{2\}$}] {}}
			child {node [node, label=below:{$\{5\}$}] {}}
		}
		;
		\end{tikzpicture}
		\caption{$n = 10^3$.}
		\label{fig:case1GroupIndependenceTree2}
	\end{subfigure}
	\begin{subfigure}[t]{.49\textwidth}
		\centering
		\begin{tikzpicture}[level distance = 6mm]
		\tikzstyle{level 1}=[sibling distance=35mm]
		\tikzstyle{level 2}=[sibling distance=15mm]
		\tikzstyle{level 3}=[sibling distance=7mm]
		
		\node [node, label=above:{$\{1,2,3,4,5,6\}$}]  {}
		child {node [node, label=above left:{$\{1,3,4,6\}$}] {}
			child {node [node, label=above left:{$\{1,4\}$}] {}
				child {node [node, label=below:{$\{1\}$}] {}}
				child {node [node, label=below:{$\{4\}$}] {}}
			}
			child {node [node, label=above right:{$\{3,6\}$}] {}
				child {node [node, label=below:{$\{3\}$}] {}}
				child {node [node, label=below:{$\{6\}$}] {}} 
			}
		}
		child {node [node, label=above right:{$\{2,5\}$}] {}
			child {node [node, label=below:{$\{2\}$}] {}}
			child {node [node, label=below:{$\{5\}$}] {}}
		}
		;
		\end{tikzpicture}
		\caption{$n = 10^4,10^5,10^6$.}
		\label{fig:case1GroupIndependenceTree3}
	\end{subfigure}
	\caption{Dimension trees $T$ obtained after computing an approximation in tree-based tensor format of density \eqref{eq:normalRandomVariablePdf} 
	with covariance matrix \eqref{eq:covarianceMatrixGroupIndependent}, using different training sample sizes $n$. The displayed trees are associated with the smallest error over 10 trials.}
	\label{fig:case1GroupIndependenceTrees}
\end{figure}

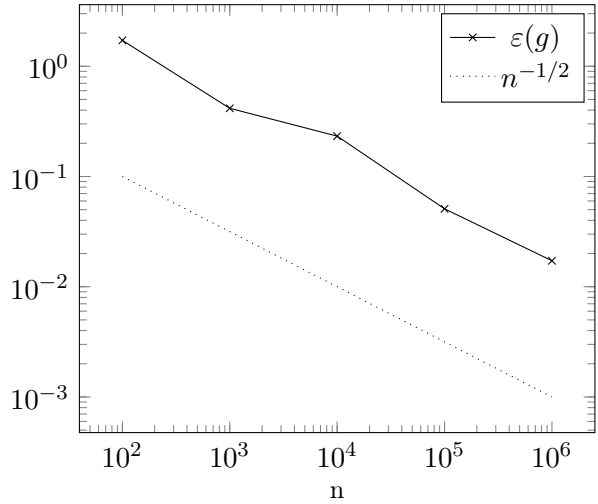
\begin{figure}
	\centering
	\begin{tikzpicture}
	\begin{loglogaxis}[label style={font=\small}, xlabel = n, domain=1e2:1e6]
	\addplot[mark = x] coordinates{
		(1e2, 1.7182)
		(1e3, 0.4155)
		(1e4, 0.2323)
		(1e5, 0.0509)
		(1e6, 0.0172)
	};
	\addlegendentry{$\varepsilon(g)$}
	\addplot[mark=none,dotted] {x^(-0.5)};
	\addlegendentry{$n^{-1/2}$}
	\end{loglogaxis}
	\end{tikzpicture}
	\caption{Convergence (in expectation) with the training sample size $n$
		for the approximation of density \eqref{eq:normalRandomVariablePdf} with covariance matrix \eqref{eq:covarianceMatrixGroupIndependent}. Expectations are computed over 10 trials for each $n$.
	}
	\label{fig:convergenceGroupIndependent}
\end{figure}

\subsubsection{Band-diagonal covariance matrix}
We now consider the following covariance matrix 
\begin{equation}
\label{eq:covarianceMatrixBandDiagonal}
\Sigma = \begin{pmatrix}
2&1/5&0&0&1/4&0\\
1/5&2&0&0&0&0\\
0&0&2&0&1/3&1/2\\
0&0&0&2&0&1\\
1/4&0&1/3&0&2&0\\
0&0&1/2&1&0&2
\end{pmatrix}
\end{equation}
which is, after applying the permutation $\sigma=(4,6,3,5,1,2)$, a band diagonal matrix
\begin{equation*}
\begin{pmatrix}
2   & 1   & 0     & 0     & 0     & 0   \\
1   & 2   & 1/2   & 0     & 0     & 0   \\
0   & 1/2 & 2     & 1/3   & 0     & 0   \\
0   & 0   & 1/3   & 2     & 1/4   & 0   \\
0   & 0   & 0     & 1/4   & 2     & 1/5 \\
0   & 0   & 0     & 0     & 1/5   & 2
\end{pmatrix}
.
\end{equation*}
The vector $(X_{\sigma(1)},\hdots,X_{\sigma(6)})$ therefore represents a Markov process and the density $f$ has the following form
\begin{equation*}
f(x) = f_{2\vert 1}(x_2 \vert x_1) f_{1\vert 5}(x_1 \vert x_5)f_{5\vert 3}(x_5 \vert x_3)f_{3\vert 6}(x_3 \vert x_6)f_{6\vert 4}(x_6 \vert x_4).
\end{equation*}
Given this  structure, one might expect the density of $X$ to be efficiently and accurately represented in tree-based tensor format with one of the linear trees of Figure \ref{fig:case1BandDiagonalExpectedTree} or any tree containing the same internal nodes. 

\begin{figure}
	\centering
	\begin{tikzpicture}[level distance = 8mm]
	\tikzstyle{level 1}=[sibling distance=15mm]
	\node [node, label=above:{$\{1,2,3,4,5,6\}$}]  {}
	child {node [node, label=above left:{$\{1,3,4,5,6\}$}] {}
		child {node [node, label=above left:{$\{3,4,5,6\}$}] {} 
			child {node [node, label=above left:{$\{3,4,6\}$}] {}
				child {node [node, label=above left:{$\{4,6\}$}] {}
					child {node [node, label=below:{$\{4\}$}] {} }
					child {node [node, label=below:{$\{6\}$}] {} }
				}
				child {node [node, label=below:{$\{3\}$}] {}}
			}
			child {node [node, label=below:{$\{5\}$}] {}}
		}
		child {node [node, label=below:{$\{1\}$}] {}
		}
	}
	child {node [node, label=below:{$\{2\}$}] {}
	}
	;
	\end{tikzpicture}
	\begin{tikzpicture}[level distance = 8mm]
	\tikzstyle{level 1}=[sibling distance=15mm]
	\node [node, label=above:{$\{1,2,3,4,5,6\}$}]  {}
	child {node [node, label=above left:{$\{1,2,3,5,6\}$}] {}
		child {node [node, label=above left:{$\{1,2,3,5\}$}] {} 
			child {node [node, label=above left:{$\{1,2,5\}$}] {}
				child {node [node, label=above left:{$\{1,2\}$}] {}
					child {node [node, label=below:{$\{2\}$}] {} }
					child {node [node, label=below:{$\{1\}$}] {} }
				}
				child {node [node, label=below:{$\{5\}$}] {}}
			}
			child {node [node, label=below:{$\{3\}$}] {}}
		}
		child {node [node, label=below:{$\{6\}$}] {}
		}
	}
	child {node [node, label=below:{$\{4\}$}] {}
	}
	;
	\end{tikzpicture}
	\caption{Example of expected trees $T$ for the approximation of density \eqref{eq:normalRandomVariablePdf} with covariance matrix \eqref{eq:covarianceMatrixBandDiagonal} in tree-based tensor format.}
	\label{fig:case1BandDiagonalExpectedTree}
\end{figure}
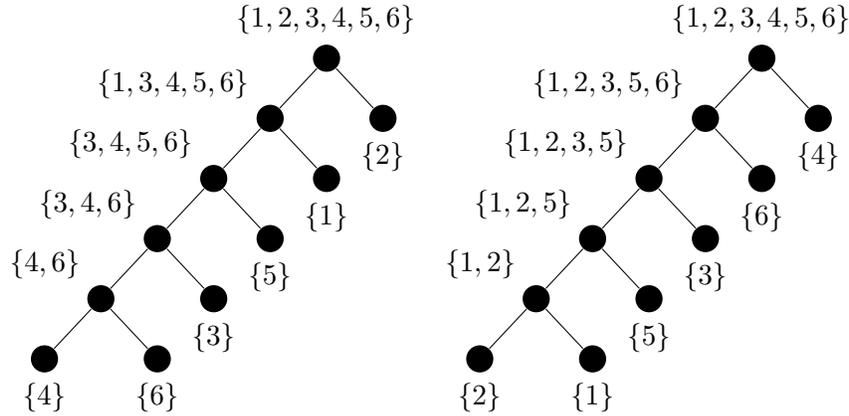

Table \ref{tab:normalRandomVariableResultsMC2} shows the results obtained when using the learning algorithm to approximate the density of $X$. We can see that the risk and error decrease when the size of the training sample increases, with a convergence rate of about $n^{-2/5}$.
Figure \ref{fig:case1banddiagonaltrees} shows the obtained trees (associated with the smallest error over 10 trials) for different sizes of training sets. We observe that except for $n=10^2$, the algorithm yields trees that contain most of the expected internal nodes.

\begin{table}[h]
	\centering
	\caption{Obtained results for the learning of density \eqref{eq:normalRandomVariablePdf} with covariance matrix \eqref{eq:covarianceMatrixBandDiagonal}, with different training sample sizes $n$. Expectation and ranges (in brackets) over 10 trials of $\Rc_{S_\text{test}}(g)$, $\varepsilon(g)$ and $C(T,r)$, and tree $T$ associated with the smallest error over 10 trials.}
	\begin{tabular}{ccccc}\toprule
		$n$ & $\Rc_{S_\text{test}}(g) \times 10^{-2}$ & $\varepsilon(g)$ & $T$ & $C(T,r)$ \\\midrule
		$10^2$ & $-2.04$ $[-4.21,1.91]$ & $0.80$ $[0.58,1.11]$  & Fig. \ref{fig:case1BandDiagonalTree1} & $311$ $[311,311]$ \\
		$10^3$ & $-5.64$ $[-5.88,-5.13]$ & $0.33$ $[0.26,0.45]$ & Fig. \ref{fig:case1BandDiagonalTree2} & $454.4$ $[311,579]$ \\
		$10^4$ & $-6.18$ $[-6.27,-5.58]$ & $0.13$ $[0.08,0.34]$ & Fig. \ref{fig:case1BandDiagonalTree3} & $670.0$ $[311,830]$ \\
		$10^5$ & $-6.30$ $[-6.30,-6.27]$ & $0.04$ $[0.03,0.07]$ & Fig. \ref{fig:case1BandDiagonalTree3} & $964.7$ $[724,1012]$ \\
		$10^6$ & $-6.31$ $[-6.31,-6.31]$ & $0.014$ $[0.01,0.02]$ & Fig. \ref{fig:case1BandDiagonalTree3} & $1140.2$ $[961,1359]$ \\
		\bottomrule
	\end{tabular}
	\label{tab:normalRandomVariableResultsMC2}
\end{table}

\begin{figure}
	\centering
	\begin{subfigure}[t]{.49\textwidth}
		\centering
		\begin{tikzpicture}[level distance = 6mm]
		\tikzstyle{level 1}=[sibling distance=15mm]
		\node [node, label=above:{$\{1,2,3,4,5,6\}$}]  {}
		child {node [node, label=above left:{$\{1,2,4,5,6\}$}] {}
			child {node [node, label=above left:{$\{1,4,5,6\}$}] {} 
				child {node [node, label=above left:{$\{1,4,6\}$}] {}
					child {node [node, label=above left:{$\{4,6\}$}] {}
						child {node [node, label=below:{$\{4\}$}] {} }
						child {node [node, label=below:{$\{6\}$}] {} }
					}
					child {node [node, label=below:{$\{1\}$}] {}}
				}
				child {node [node, label=below:{$\{5\}$}] {}}
			}
			child {node [node, label=below:{$\{2\}$}] {}
			}
		}
		child {node [node, label=below:{$\{3\}$}] {}
		}
		;
		\end{tikzpicture}
		\caption{$n = 10^2$.}
		\label{fig:case1BandDiagonalTree1}
	\end{subfigure}
	\begin{subfigure}[t]{.49\textwidth}
		\centering
		\begin{tikzpicture}[level distance = 8mm]
		\tikzstyle{level 1}=[sibling distance=30mm]
		\tikzstyle{level 2}=[sibling distance=15mm]
		\tikzstyle{level 3}=[sibling distance=10mm]
		
		\node [node, label=above:{$\{1,2,3,4,5,6\}$}]  {}
		child {node [node, label=above left:{$\{3,4,6\}$}] {}
			child {node [node, label=above left:{$\{4,6\}$}] {}
				child {node [node, label=below:{$\{6\}$}] {}
				}
				child {node [node, label=below:{$\{4\}$}] {}
				}
			}
			child {node [node, label=below:{$\{3\}$}] {}
			}
		}
		child {node [node, label=above right:{$\{1,2,5\}$}] {}
			child {node [node, label=below:{$\{1\}$}] {}
			}
			child {node [node, label=above right:{$\{2,5\}$}] {}
				child {node [node, label=below:{$\{2\}$}] {}
				}
				child {node [node, label=below:{$\{5\}$}] {}
				}
			}
		}
		;
		\end{tikzpicture}
		\caption{$n = 10^3$.}
		\label{fig:case1BandDiagonalTree2}
	\end{subfigure}
	\begin{subfigure}[t]{.49\textwidth}
		\centering
		\begin{tikzpicture}[level distance = 6mm]
		\tikzstyle{level 1}=[sibling distance=15mm]
		\node [node, label=above:{$\{1,2,3,4,5,6\}$}]  {}
		child {node [node, label=above left:{$\{1,3,4,5,6\}$}] {}
			child {node [node, label=above left:{$\{3,4,5,6\}$}] {} 
				child {node [node, label=above left:{$\{3,4,6\}$}] {}
					child {node [node, label=above left:{$\{4,6\}$}] {}
						child {node [node, label=below:{$\{4\}$}] {} }
						child {node [node, label=below:{$\{6\}$}] {} }
					}
					child {node [node, label=below:{$\{3\}$}] {}}
				}
				child {node [node, label=below:{$\{5\}$}] {}}
			}
			child {node [node, label=below:{$\{1\}$}] {}
			}
		}
		child {node [node, label=below:{$\{2\}$}] {}
		}
		;
		\end{tikzpicture}
		\caption{$n = 10^4,10^5,10^6$.}
		\label{fig:case1BandDiagonalTree3}
	\end{subfigure}
	\caption{Dimension trees $T$ obtained after computing an approximation in tree-based tensor format of density \eqref{eq:normalRandomVariablePdf} with covariance matrix \eqref{eq:covarianceMatrixBandDiagonal} using different training sample sizes $n$. The displayed trees are associated with the smallest error over 10 trials.}
	\label{fig:case1banddiagonaltrees}
\end{figure}
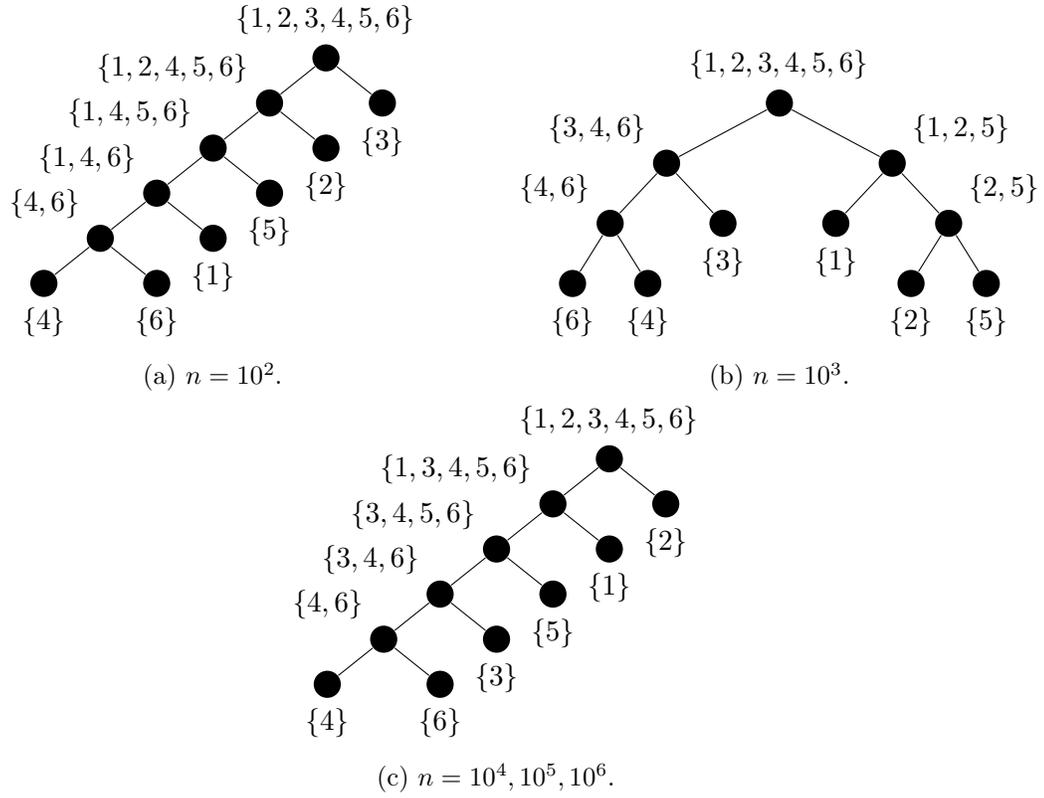

\clearpage

\subsection{Markov chain process}
In this section, we study the discrete time discrete state space Markov process of Example \ref{ex:markovChainTreeInfluence}. We recall that $X = (X_1,\hdots,X_8)$, where each random variable $X_\nu$ takes values in $\Xc_\nu = \{1,\hdots,5\}$. The distribution of $X$ writes
\begin{equation}\label{eq:markovChain}
f(i_1,\hdots,i_8) = \Pbb(X_1 = i_1,\hdots,X_8 = i_8) = f_{d \vert d-1}(i_8 \vert i_7) \cdots f_{2 \vert 1}(i_2 \vert i_{1}) f_1(i_1) 
\end{equation}
with $f_1(i_1) = 1/5$ for all $i_1 \in \Xc_1$, and for $\nu = 1,\hdots,d-1$, $f_{\nu+1 \vert \nu}(i_{\nu+1} \vert i_{\nu}) = P_{i_\nu,i_{\nu+1}}$ the $(i_\nu,i_{\nu+1})$ component of a randomly chosen rank-2 transition matrix $P$, independent of the dimension $\nu$.

As shown in Example \ref{ex:markovChainTreeInfluence}, the choice of the tree has a great impact on the storage complexity of the representation of the Markov process. We then expect the adaptive learning algorithm to compute an approximation of $f$ with a tree containing the same internal nodes as in Figure \ref{fig:linearTreeMarkovChain}. Table \ref{tab:markovChain} shows the ranges over 10 trials of the obtained results. One can notice that, even though the algorithm did not recover an optimal tree for the representation of the Markov chain, it is able, with a sample size high enough, to represent it with dimension trees including most of the internal nodes yielding the smallest $\alpha$-ranks, limiting the complexity of the representation.

\begin{table}[h]
	\centering
	\caption{Obtained results for the learning of \eqref{eq:markovChain}, with different training sample sizes $n$. Expectation and ranges (in brackets) over 10 trials of $\Rc_{S_\text{test}}(g)$, $\varepsilon(g)$ and $C(T,r)$, and tree $T$ associated with the smallest error over 10 trials.}
	\label{tab:markovChain}
	\begin{tabular}{ccccc}\toprule
		$n$ & $\Rc_{S_\text{test}}(g)$ & $\varepsilon(g)$ & $T$ & $C(T,r)$ \\\midrule
		$10^3$ & $-1.86$ $[-2.22,-1.25]$ & $0.59$ $[0.49,0.75]$ & Fig. \ref{fig:markovChainTree1} & $84$ $[47,109]$ \\
		$10^4$ & $-2.58$ $[-2.85,-2.04]$ & $0.31$ $[0.16,0.55]$ & Fig. \ref{fig:markovChainTree2} & $183$ $[72,298]$ \\
		$10^5$ & $-2.92$ $[-2.93,-2.91]$ & $0.06$ $[0.04,0.08]$ & Fig. \ref{fig:markovChainTree3} & $422$ $[294,519]$ \\
		$10^6$ & $-2.93$ $[-2.93,-2.93]$ & $0.016$ $[0.01,0.02]$ & Fig. \ref{fig:markovChainTree4} & $591.6$ $[384,1010]$ \\
		\bottomrule  
	\end{tabular}
\end{table}

\begin{figure}
	\centering
	\begin{subfigure}[t]{.49\textwidth}
		\centering
		\begin{tikzpicture}[level distance = 7mm]
		\tikzstyle{level 1}=[sibling distance=30mm]
		\tikzstyle{level 2}=[sibling distance=14mm]
		\tikzstyle{level 3}=[sibling distance=8mm]
		\tikzstyle{level 4}=[sibling distance=6mm]
		\node [node, label=above:{$\{1,2,3,4,5,6,7,8\}$}]  {}
		child {node [node, label=above left:{$\{3,4,5,6,7\}$}] {}
			child {node [node, label=above left:{$\{5,6,7\}$}] {}
				child {node [node, label=above left:{$\{5,7\}$}] {}
					child {node [node, label=below:{$\{5\}$}] {}}
					child {node [node, label=below:{$\{7\}$}] {}}
				}
				child {node [node, label=below:{$\{6\}$}] {}}
			}
			child {node [node, label=above right:{$\{3,4\}$}] {}
				child {node [node, label=below:{$\{3\}$}] {}}
				child {node [node, label=below:{$\{4\}$}] {}}
			}
		}
		child {node [node, label=above right:{$\{1,2,8\}$}] {}
			child {node [node, label=below:{$\{8\}$}] {}}
			child {node [node, label=above right:{$\{1,2\}$}] {}
				child {node [node, label=below:{$\{1\}$}] {}}
				child {node [node, label=below:{$\{2\}$}] {}}
			}
		}
		;
		\end{tikzpicture}
		\caption{$n = 10^3$.}
		\label{fig:markovChainTree1}
	\end{subfigure}
	\begin{subfigure}[t]{.49\textwidth}
		\centering
		\begin{tikzpicture}[level distance = 7mm]
		\tikzstyle{level 1}=[sibling distance=30mm]
		\tikzstyle{level 2}=[sibling distance=16mm]
		\tikzstyle{level 3}=[sibling distance=10mm]
		\tikzstyle{level 4}=[sibling distance=8mm]
		\node [node, label=above:{$\{1,2,3,4,5,6,7,8\}$}]  {}
		child {node [node, label=above left:{$\{3,4,5,6,7\}$}] {}
			child {node [node, label=above left:{$\{3,4,5\}$}] {}
				child {node [node, label=above left:{$\{4,5\}$}] {}
					child {node [node, label=below:{$\{5\}$}] {}}
					child {node [node, label=below:{$\{4\}$}] {}}
				}
				child {node [node, label=below:{$\{3\}$}] {}}
			}
			child {node [node, label=above right:{$\{1,2\}$}] {}
				child {node [node, label=below:{$\{1\}$}] {}}
				child {node [node, label=below:{$\{2\}$}] {}}
			}
		}
		child {node [node, label=above right:{$\{6,7,8\}$}] {}
			child {node [node, label=below:{$\{6\}$}] {}}
			child {node [node, label=above right:{$\{7,8\}$}] {}
				child {node [node, label=below:{$\{7\}$}] {}}
				child {node [node, label=below:{$\{8\}$}] {}}
			}
		}
		;
		\end{tikzpicture}
		\caption{$n = 10^4$.}
		\label{fig:markovChainTree2}
	\end{subfigure}
	\begin{subfigure}[t]{.49\textwidth}
		\centering
		\begin{tikzpicture}[level distance = 7mm]
		\tikzstyle{level 1}=[sibling distance=25mm]
		\tikzstyle{level 2}=[sibling distance=15mm]
		\tikzstyle{level 3}=[sibling distance=14mm]
		\tikzstyle{level 4}=[sibling distance=6mm]
		\node [node, label=above:{$\{1,2,3,4,5,6,7,8\}$}]  {}
		child {node [node, label=above left:{$\{1,2,3,4,5\}$}] {}
			child {node [node, label=above left:{$\{2,3,4,5\}$}] {}
				child {node [node, label=above left:{$\{2,3\}$}] {}
					child {node [node, label=below:{$\{2\}$}] {}}
					child {node [node, label=below:{$\{3\}$}] {}}
				}
				child {node [node, label=above:{$\{4,5\}$}] {}
					child {node [node, label=below:{$\{4\}$}] {}}
					child {node [node, label=below:{$\{5\}$}] {}}
				}
			}
			child {node [node, label=below:{$\{1\}$}] {}
			}
		}
		child {node [node, label=above right:{$\{6,7,8\}$}] {}
			child {node [node, label=below:{$\{6\}$}] {}}
			child {node [node, label=above right:{$\{7,8\}$}] {}
				child {node [node, label=below:{$\{7\}$}] {}}
				child {node [node, label=below:{$\{8\}$}] {}}
			}
		}
		;
		\end{tikzpicture}
		\caption{$n = 10^5$.}
		\label{fig:markovChainTree3}
	\end{subfigure}
	\begin{subfigure}[t]{.49\textwidth}
		\centering
		\begin{tikzpicture}[level distance = 7mm]
		\tikzstyle{level 1}=[sibling distance=28mm]
		\tikzstyle{level 2}=[sibling distance=18mm]
		\tikzstyle{level 3}=[sibling distance=9mm]
		\tikzstyle{level 4}=[sibling distance=6mm]
		\node [node, label=above:{$\{1,2,3,4,5,6,7,8\}$}]  {}
		child {node [node, label=above left:{$\{1,2,3,4\}$}] {}
			child {node [node, label=above left:{$\{1,2\}$}] {}
				child {node [node, label=below:{$\{1\}$}] {}}
				child {node [node, label=below:{$\{2\}$}] {}}
			}
			child {node [node, label=above:{$\{3,4\}$}] {}
				child {node [node, label=below:{$\{3\}$}] {}}
				child {node [node, label=below:{$\{4\}$}] {}}
			}
		}
		child {node [node, label=above right:{$\{5,6,7,8\}$}] {}
			child {node [node, label=below:{$\{8\}$}] {}}
			child {node [node, label=above right:{$\{5,6,7\}$}] {}
				child {node [node, label=below:{$\{7\}$}] {}}
				child {node [node, label=above right:{$\{5,6\}$}] {}
					child {node [node, label=below:{$\{5\}$}] {}}
					child {node [node, label=below:{$\{6\}$}] {}}
				}
			}
		}
		;
		\end{tikzpicture}
		\caption{$n = 10^6$.}
		\label{fig:markovChainTree4}
	\end{subfigure}
	\caption{Dimension trees $T$ obtained after computing an approximation in tree-based tensor format of \eqref{eq:markovChain}, using different training sample sizes $n$. The displayed trees are associated with the smallest error over 10 trials.}
	\label{fig:markovChainTrees}
\end{figure}
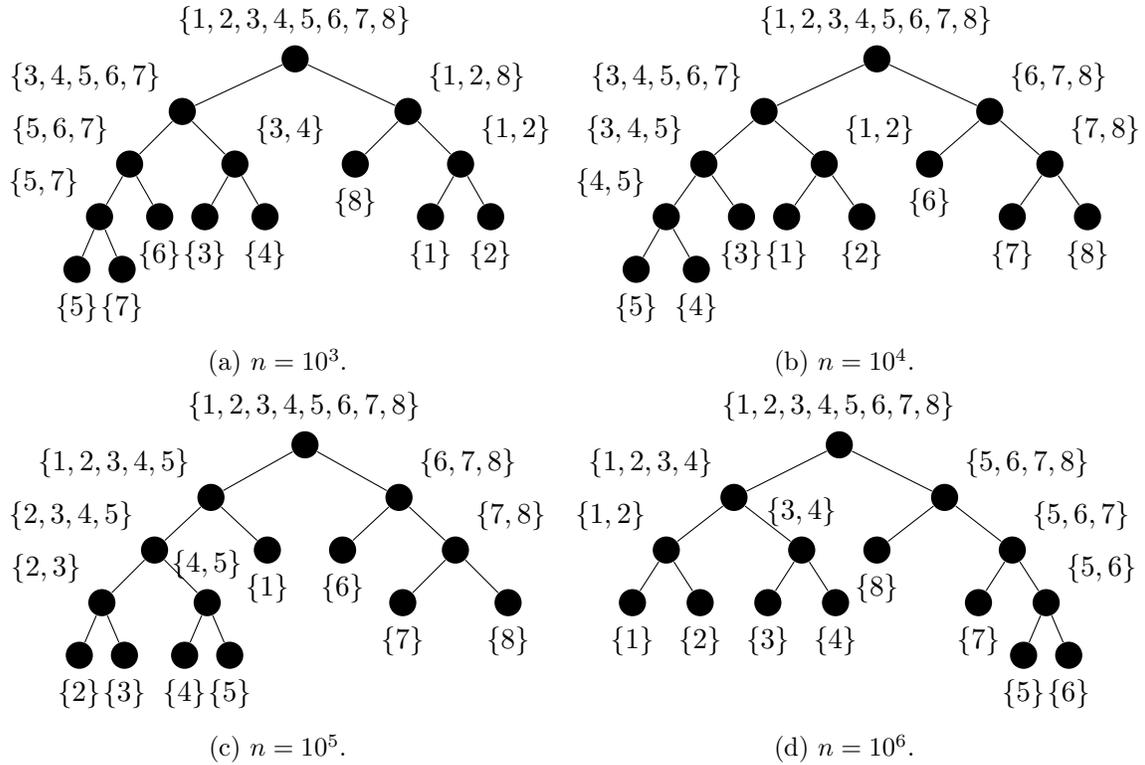

\subsection{Graphical model with discrete random variables}
We consider the graphical model already studied in Example \ref{ex:graphicalModel}, in dimension $d = 10$, represented in Figure \ref{fig:graphical-model-d10}. The random variable $X_\nu$ takes values in $\Xc_\nu = \{1,2,3,4,5\}$, $1 \leq \nu \leq d$, so that $f(i) = \Pbb(X = i)$ (the probability table) is defined by
\begin{align}\label{eq:graphicalModel}
f(i_1,\hdots,i_{10})      &= f_{1,2,3,7}(i_1,i_2,i_3,i_7)f_{3,4,5,6}(i_3,i_4,i_5,i_6)f_{4,8}(i_4,i_8)f_{8,9,10}(i_8,i_9,i_{10}). 
\end{align}
The tensors $f_\alpha$ are randomly selected under the constraint that any of their matricization has a rank equal to $3$. 
In this example, the probability table can be stored in memory, so that a standard pseudo-random number generator can be used to generate samples from $X$.   

Representing this function in tree-based tensor format with the binary tree in Figure \ref{fig:graphical-model-d10-initial-tree} yields a storage complexity of $117027$, whereas using the tree in Figure \ref{fig:graphical-model-d10-optimized-tree}, which exhibits the dependence structure of the graphical model, leads to a representation with a storage complexity of $675$. We then expect our algorithms to be able to learn $f$ with a tree representing its dependence structure.

Table \ref{tab:graphicalModel} shows the ranges over 10 trials of the obtained results. We observe that, even though the obtained errors are high, the algorithm is able to provide approximations of $f$ with a tree that exhibits the dependence structure of the graphical model (for instance by containing the nodes $\{1,2,3,7\}$, $\{4,8\}$ or $\{4,8,9,10\}$, which are cliques of the graph of $f$).

\begin{table}[h]
	\centering
	\caption{Obtained results for the learning of \eqref{eq:graphicalModel}, with different training sample sizes $n$. Expectation and ranges (in brackets) over 10 trials of $\Rc_{S_\text{test}}(g)$, $\varepsilon(g)$ and $C(T,r)$, and tree $T$ associated with the smallest error over 10 trials.}
	\label{tab:graphicalModel}
	\begin{tabular}{ccccc}\toprule
		$n$ & $\Rc_{S_\text{test}}(g)$ & $\varepsilon(g)$ & $T$ & $C(T,r)$ \\\midrule
		$10^3$ & $-1.24$ $[-1.38, -1.13]$  & $0.56$ $[0.49, 0.61]$ & Fig. \ref{fig:graphicalModelTree1} & $64.8$ $[59, 78]$ \\
		$10^4$ & $-1.40$ $[-1.59, -1.23]$ & $0.46$ $[0.35, 0.56]$ & Fig. \ref{fig:graphicalModelTree2} & $109.3$ $[59, 208]$ \\
		$10^5$ & $-1.71$ $[-1.77, -1.52]$ & $0.21$ $[0.13, 0.40]$ & Fig. \ref{fig:graphicalModelTree3} & $518.2$ $[131, 703]$ \\
		$10^6$ & $-1.75$ $[-1.80, -1.60]$ & $0.16$ $[0.06, 0.33]$ & Fig. \ref{fig:graphicalModelTree4} & $784.3$ $[363, 1148]$ \\
		\bottomrule  
	\end{tabular}
\end{table}

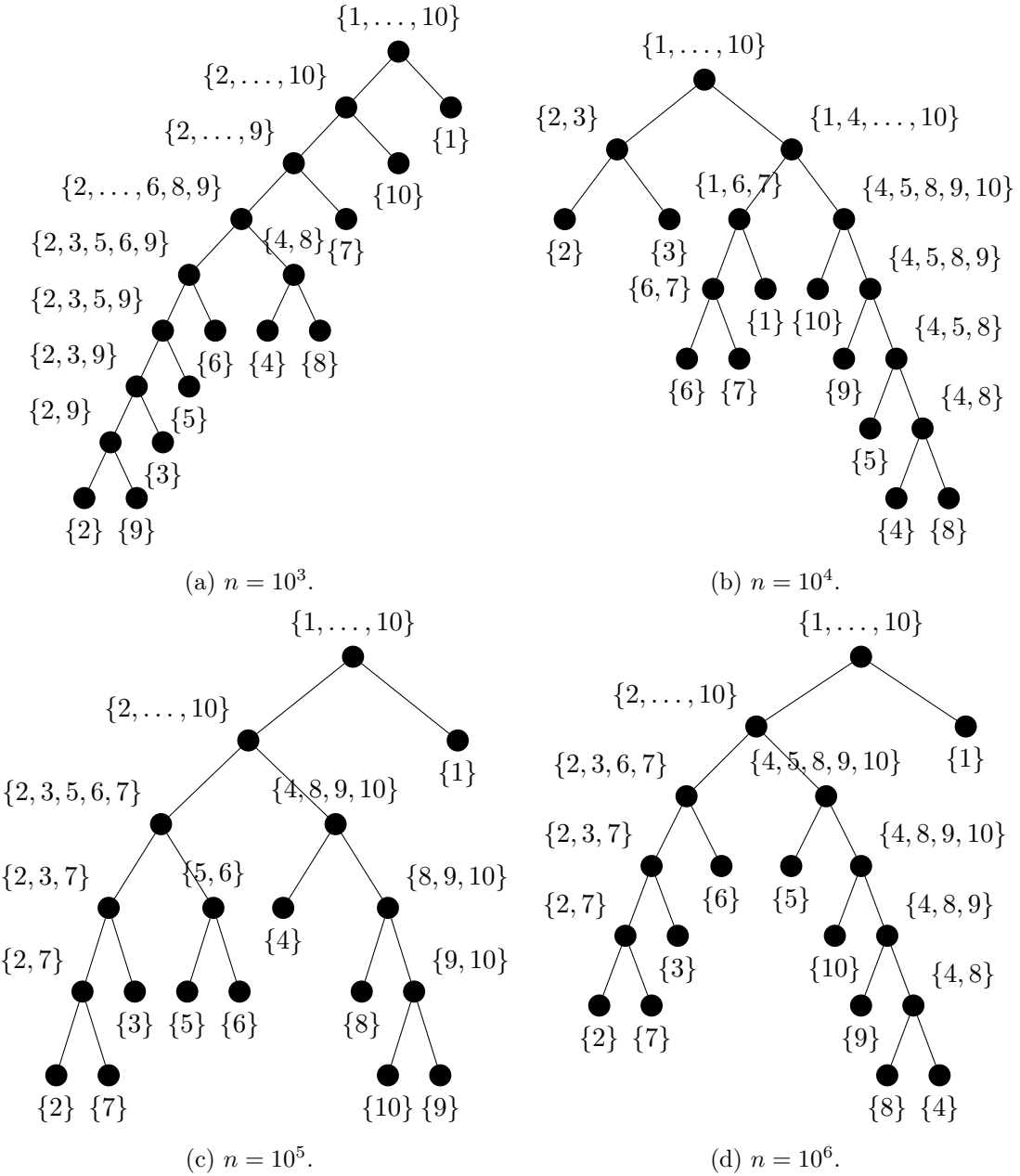
\begin{figure}
	\centering
	\begin{subfigure}[t]{.49\textwidth}
		\centering
		\begin{tikzpicture}[level distance = 8mm]
			\tikzstyle{level 1}=[sibling distance=15mm]
			\tikzstyle{level 5}=[sibling distance=7.5mm]
			
			\tikzstyle{root}=[circle,draw,thick,fill=black,scale=.8]  
			\tikzstyle{interior}=[circle,draw,solid,thick,fill=black,scale=.8]  
			\tikzstyle{leaves}=[circle,draw,solid,thick,fill=black,scale=.8]  
			\node [root,label=above:{$\{1,\hdots,10\}$}]  {} 
			child{node [interior,label=above left:{$\{2,\hdots,10\}$}] {}
			    	child{node [interior,label=above left:{$\{2,\hdots,9\}$}] {}
			    		child{node [interior,label=above left:{$\{2,\hdots,6,8,9\}$}] {}
			    			child{node [interior,label=above left:{$\{2,3,5,6,9\}$}] {}
			    				child{node [interior,label=above left:{$\{2,3,5,9\}$}] {}
			    					child{node [interior,label=above left:{$\{2,3,9\}$}] {}
			    						child{node [interior,label=above left:{$\{2,9\}$}] {}
			    							child{node [leaves,label=below:{$\{2\}$}] {}}
			    							child{node [leaves,label=below:{$\{9\}$}] {}}}
			    						child{node [leaves,label=below:{$\{3\}$}] {}}}
			    					child{node [leaves,label=below:{$\{5\}$}] {}}}
			   					child{node [leaves,label=below:{$\{6\}$}] {}}}
			   				child{node [interior,label=above:{$\{4,8\}$}] {}
			   					child{node [leaves,label=below:{$\{4\}$}] {}}
			   					child{node [leaves,label=below:{$\{8\}$}] {}}}	
			   				}
			    				child{node [leaves,label=below:{$\{7\}$}] {}}}
			    			child{node [leaves,label=below:{$\{10\}$}] {}}}
			    		child{node [leaves,label=below:{$\{1\}$}] {}};
		\end{tikzpicture}
        \caption{$n = 10^3$.}
		\label{fig:graphicalModelTree1}
	\end{subfigure}
	\begin{subfigure}[t]{.49\textwidth}
		\centering
		\begin{tikzpicture}[level distance = 10mm]
			\tikzstyle{level 1}=[sibling distance=25mm]
			\tikzstyle{level 2}=[sibling distance=15mm]
			\tikzstyle{level 3}=[sibling distance=7.5mm]

			\tikzstyle{root}=[circle,draw,thick,fill=black,scale=.8]
			\tikzstyle{interior}=[circle,draw,solid,thick,fill=black,scale=.8]  
			\tikzstyle{leaves}=[circle,draw,solid,thick,fill=black,scale=.8]  
			\node [root,label=above:{$\{1,\hdots,10\}$}]  {} 
			child{node [interior,label=above left:{$\{2,3\}$}] {}
				child{node [leaves,label=below:{$\{2\}$}] {}}
				child{node [leaves,label=below:{$\{3\}$}] {}}}
			child{node [interior,label=above right:{$\{1,4,\hdots,10\}$}] {}
				child{node [interior,label=above:{$\{1,6,7\}$}] {}
					child{node [interior,label= left:{$\{6,7\}$}] {}
						child{node [leaves,label=below:{$\{6\}$}] {}}
						child{node [leaves,label=below:{$\{7\}$}] {}}}
					child{node [leaves,label=below:{$\{1\}$}] {}}}
				child{node [interior,label=above right:{$\{4,5,8,9,10\}$}] {}
					child{node [leaves,label=below:{$\{10\}$}] {}}
					child{node [interior,label=above right:{$\{4,5,8,9\}$}] {}
						child{node [leaves,label=below:{$\{9\}$}] {}}
						child{node [interior,label=above right:{$\{4,5,8\}$}] {}
							child{node [leaves,label=below:{$\{5\}$}] {}}
							child{node [interior,label=above right:{$\{4,8\}$}] {}
								child{node [leaves,label=below:{$\{4\}$}] {}}
								child{node [leaves,label=below:{$\{8\}$}] {}}}
					}}
			}};
		\end{tikzpicture}
		\caption{$n = 10^4$.}
		\label{fig:graphicalModelTree2}
	\end{subfigure}
	\begin{subfigure}[t]{.49\textwidth}
		\centering
		\begin{tikzpicture}[level distance = 12mm]
			\tikzstyle{level 1}=[sibling distance=30mm]
			\tikzstyle{level 2}=[sibling distance=25mm]
			\tikzstyle{level 3}=[sibling distance=15mm]
			\tikzstyle{level 4}=[sibling distance=7.5mm]
			\tikzstyle{root}=[circle,draw,thick,fill=black,scale=.8]  
			\tikzstyle{interior}=[circle,draw,solid,thick,fill=black,scale=.8]  
			\tikzstyle{leaves}=[circle,draw,solid,thick,fill=black,scale=.8]  
			\node [root,label=above:{$\{1,\hdots,10\}$}]  {} 
			child{node [interior,label=above left:{$\{2,\hdots,10\}$}] {}
				child{node [interior,label=above left:{$\{2,3,5,6,7\}$}] {}
					child{node [interior,label=above left:{$\{2,3,7\}$}] {}
						child{node [interior,label=above left:{$\{2,7\}$}] {}
							child{node [leaves,label=below:{$\{2\}$}] {}}
							child{node [leaves,label=below:{$\{7\}$}] {}}}
						child{node [leaves,label=below:{$\{3\}$}] {}}}
					child{node [interior,label=above:{$\{5,6\}$}] {}
						child{node [leaves,label=below:{$\{5\}$}] {}}
						child{node [leaves,label=below:{$\{6\}$}] {}}}}
				child{node [interior,label=above:{$\{4,8,9,10\}$}] {}
					child{node [leaves,label=below:{$\{4\}$}] {}}
					child{node [interior,label=above right:{$\{8,9,10\}$}] {}
						child{node [leaves,label=below:{$\{8\}$}] {}}
						child{node [interior,label=above right:{$\{9,10\}$}] {}child{node [leaves,label=below:{$\{10\}$}] {}}
							child{node [leaves,label=below:{$\{9\}$}] {}}}
			}}}
			child{node [leaves,label=below:{$\{1\}$}] {}};
		\end{tikzpicture}
		\caption{$n = 10^5$.}
		\label{fig:graphicalModelTree3}
	\end{subfigure}
	\begin{subfigure}[t]{.49\textwidth}
		\centering
		\begin{tikzpicture}[level distance = 10mm]
			\tikzstyle{level 1}=[sibling distance=30mm]
			\tikzstyle{level 2}=[sibling distance=20mm]
			\tikzstyle{level 3}=[sibling distance=10mm]
			\tikzstyle{level 4}=[sibling distance=7.5mm]
			\tikzstyle{root}=[circle,draw,thick,fill=black,scale=.8]  
			\tikzstyle{interior}=[circle,draw,solid,thick,fill=black,scale=.8]  
			\tikzstyle{leaves}=[circle,draw,solid,thick,fill=black,scale=.8]  
			\node [root,label=above:{$\{1,\hdots,10\}$}]  {} 
			child{node [interior,label=above left:{$\{2,\hdots,10\}$}] {}
				child{node [interior,label=above left:{$\{2,3,6,7\}$}] {}
					child{node [interior,label=above left:{$\{2,3,7\}$}] {}
						child{node [interior,label=above left:{$\{2,7\}$}] {}
							child{node [leaves,label=below:{$\{2\}$}] {}}
							child{node [leaves,label=below:{$\{7\}$}] {}}}
						child{node [leaves,label=below:{$\{3\}$}] {}}}
					child{node [leaves,label=below:{$\{6\}$}] {}}}
				child{node [interior,label=above:{$\{4,5,8,9,10\}$}] {}
					child{node [leaves,label=below:{$\{5\}$}] {}}
					child{node [interior,label=above right:{$\{4,8,9,10\}$}] {}
						child{node [leaves,label=below:{$\{10\}$}] {}}
						child{node [interior,label=above right:{$\{4,8,9\}$}] {}
							child{node [leaves,label=below:{$\{9\}$}] {}}
							child{node [interior,label=above right:{$\{4,8\}$}] {}
								child{node [leaves,label=below:{$\{8\}$}] {}}
								child{node [leaves,label=below:{$\{4\}$}] {}}}
						}
			}}}
			child{node [leaves,label=below:{$\{1\}$}] {}};
		\end{tikzpicture}
		\caption{$n = 10^6$.}
		\label{fig:graphicalModelTree4}
	\end{subfigure}
	\caption{Dimension trees $T$ obtained after computing an approximation in tree-based tensor format of \eqref{eq:graphicalModel}, using different training sample sizes $n$. The displayed trees are associated with the smallest error over 10 trials.}
	\label{fig:graphicalModelTrees}
\end{figure}

\clearpage 

\section{Conclusion}
We have proposed algorithms for learning high-dimensional probability distributions in tree-based tensor formats.  The tree-based rank and dimension tree adaptation strategies enable the computation of approximations with trees that exhibit the dependence structure of the distribution.

Different contrast {(or loss)} functions could be used for learning a probabilistic distribution. However, this calls for a modification of the proposed rank-adaptation strategy, since the rank selection procedure as well as the computation of a correction depend on the choice of the contrast function. 

Also, the proposed algorithms should be modified to be able to work efficiently with large data sets, for instance by using subsampling methods as in stochastic gradient methods.


\subsection*{Acknowledgements}
    The authors acknowledge AIRBUS Group for the financial support with the project AtRandom, and 
 the Joint Laboratory of Marine Technology between Naval Group and Centrale Nantes for the financial support with the project Eval PI.

\bibliographystyle{plain}

\begin{thebibliography}{10}

\bibitem{Falco2018SEMA}
Falc{\'o}, A., Hackbusch, W., and Nouy, A., Tree-based tensor formats, {\em
  SeMA Journal}, 2018.

\bibitem{hackbusch2009newscheme}
Hackbusch, W. and Kuhn, S., {A New Scheme for the Tensor Representation}, {\em
  Journal of Fourier analysis and applications}, {15}({5}):{706--722}, {2009}.

\bibitem{oseledets2009breaking}
Oseledets, I. and Tyrtyshnikov, E., Breaking the curse of dimensionality, or
  how to use svd in many dimensions, {\em SIAM Journal on Scientific
  Computing}, 31(5):3744--3759, 2009.

\bibitem{OSE11}
Oseledets, I., Tensor-train decomposition, {\em SIAM J. Sci. Comput.},
  33(5):2295--2317, 2011.

\bibitem{uschmajew2013geometry}
Uschmajew, A. and Vandereycken, B., The geometry of algorithms using
  hierarchical tensors, {\em Linear Algebra and its Applications},
  439(1):133--166, 2013.

\bibitem{falco2020geometry}
Falc{\'o}, A., Nouy, A., and Hackbusch, W., Geometry of tree-based tensor
  formats in tensor banach spaces, {\em arXiv preprint arXiv:2011.08466}, 2020.

\bibitem{Falco2019}
Falc{\'o}, A., Hackbusch, W., and Nouy, A., On the {Dirac}--{Frenkel}
  variational principle on tensor {Banach} spaces, {\em Foundations of
  Computational Mathematics}, 19(1):159--204, Feb 2019.

\bibitem{Holtz:2012fk}
Holtz, S., Rohwedder, T., and Schneider, R., On manifolds of tensors of fixed
  tt-rank, {\em Numerische Mathematik}, 120(4):701--731, 2012.

\bibitem{Ali2020ApproximationWTpartI}
Ali, M. and Nouy, A., Approximation with tensor networks. part i: Approximation
  spaces, {\em ArXiv}, abs/2007.00118, 2020.

\bibitem{Ali2020ApproximationWTpartII}
Ali, M. and Nouy, A., Approximation with tensor networks. part ii:
  Approximation rates for smoothness classes, {\em ArXiv}, abs/2007.00128,
  2020.

\bibitem{ali2021approximation}
Ali, M. and Nouy, A., Approximation with tensor networks. part iii:
  Multivariate approximation, {\em arXiv preprint arXiv:2101.11932}, 2021.

\bibitem{hackbusch2012book}
Hackbusch, W., {\em {Tensor spaces and numerical tensor calculus}}, Vol.~42 of
  Springer series in computational mathematics, Springer, Heidelberg, 2012.

\bibitem{KOL09}
Kolda, T.G. and Bader, B.W., Tensor decompositions and applications, {\em SIAM
  Review}, 51(3):455--500, 2009.

\bibitem{Khoromskij:2012fk}
Khoromskij, B., Tensors-structured numerical methods in scientific computing:
  Survey on recent advances, {\em Chemometrics and Intelligent Laboratory
  Systems}, 110(1):1 -- 19, 2012.

\bibitem{Grasedyck:2013}
Grasedyck, L., Kressner, D., and Tobler, C., A literature survey of low-rank
  tensor approximation techniques, {\em GAMM-Mitteilungen}, 36(1):53--78, 2013.

\bibitem{nouy2017handbook}
Nouy, A.
\newblock {\em Low-Rank Tensor Methods for Model Order Reduction}, pp.
  857--882.
\newblock Springer International Publishing, Cham, 2017.

\bibitem{nouy:2017_morbook}
Nouy, A., Low-rank methods for high-dimensional approximation and model order
  reduction, {\em In Model Reduction and Approximation: Theory and Algorithms,
  SIAM}, volume 15:chapter 4, pages 171--226, 2017.

\bibitem{bachmayr2016tensor}
Bachmayr, M., Schneider, R., and Uschmajew, A., Tensor networks and
  hierarchical tensors for the solution of high-dimensional partial
  differential equations, {\em Foundations of Computational Mathematics}, pp.
  1--50, 2016.

\bibitem{Eigel_2018}
Eigel, M., Marschall, M., and Schneider, R., Sampling-free bayesian inversion
  with adaptive hierarchical tensor representations, {\em Inverse Problems},
  34(3):035010, feb 2018.

\bibitem{Dolgov:2020vm}
Dolgov, S., Anaya-Izquierdo, K., Fox, C., and Scheichl, R., Approximation and
  sampling of multivariate probability distributions in the tensor train
  decomposition, {\em Statistics and Computing}, 30(3):603--625, 2020.

\bibitem{eigel2020lowrank}
Eigel, M., Gruhlke, R., and Marschall, M.
\newblock Low-rank tensor reconstruction of concentrated densities with
  application to bayesian inversion, 2020.

\bibitem{Grelier2018}
{Grelier}, E., {Nouy}, A., and {Chevreuil}, M., {Learning with tree-based
  tensor formats}, {\em arXiv e-prints}, p. arXiv:1811.04455, Nov 2018.

\bibitem{2020arXiv200701165M}
{Michel}, B. and {Nouy}, A., {Learning with tree tensor networks: complexity
  estimates and model selection}, {\em arXiv e-prints}, p. arXiv:2007.01165,
  July 2020.

\bibitem{Dick2013}
Dick, J., Kuo, F.Y., and Sloan, I.H., High-dimensional integration: the
  quasi-monte carlo way, {\em Acta Numerica}, 22:133--288, 2013.

\bibitem{haberstich2021active}
Haberstich, C., Nouy, A., and Perrin, G., Active learning of tree tensor
  networks using optimal least-squares, {\em arXiv preprint arXiv:2104.13436},
  2021.

\bibitem{celisse2014optimal}
Celisse, A. , Optimal cross-validation in density estimation with the
  ${L}^2$-loss, {\em The Annals of Statistics}, 42(5):1879--1910, 2014.

\bibitem{ballani2014tree}
Ballani, J. and Grasedyck, L., Tree adaptive approximation in the hierarchical
  tensor format, {\em SIAM journal on scientific computing},
  36(4):A1415--A1431, 2014.

\bibitem{Bebendorf2014}
Bebendorf, M. and Kuske, C., Separation of variables for function generated
  high-order tensors, {\em Journal of Scientific Computing}, 61(1):145--165,
  2014.

\bibitem{barcza2011}
Barcza, G., Legeza, O., Marti, K.H., and Reiher, M., Quantum-information
  analysis of electronic states of different molecular structures, {\em Phys.
  Rev. A}, 83:012508, Jan 2011.

\bibitem{nouy_anthony_2020_3653971}
Nouy, A., Grelier, E., and Giraldi, L., {ApproximationToolbox}, February 2020.
  \href{https://anthony-nouy.github.io/ApproximationToolbox/}{https://anthony-nouy.github.io/ApproximationToolbox/}.

\bibitem{anthony_nouy_2020_3903331}
Nouy, A. and Grelier, E., anthony-nouy/tensap v1.1, June 2020.
  \href{https://anthony-nouy.github.io/tensap/}{https://anthony-nouy.github.io/tensap/}.

\bibitem{Falco:2012uq}
Falc{\'o}, A. and Hackbusch, W., On minimal subspaces in tensor
  representations, {\em Foundations of Computational Mathematics}, 12:765--803,
  2012.

\end{thebibliography}

\end{document}